\newtheorem{assumption}{Assumption}
\newtheorem{theorem}{Theorem}
\newtheorem{lemma}{Lemma}
\newtheorem{remark}{Remark}
\newtheorem{definition}{Definition}
\newtheorem{corollary}{Corollary}
\newcommand{\hide}[1]{} 
\newcommand\Ebb{\ensuremath{\mathbb{E}}}
\newcommand\Rbb{\ensuremath{{\mathbb{R}}}}
\newcommand\Nbb{\ensuremath{{\mathbb{N}}}}
\newcommand\Ibb{\ensuremath{{\mathbb{I}}}}
\newcommand\Zbb{\ensuremath{{\mathbb{Z}}}}
\newcommand\Gc{\ensuremath{\mathcal{G}}}
\newcommand\Ec{\ensuremath{\mathcal{E}}}
\newcommand\Xc{\ensuremath{{\mathcal{X}}}}
\newcommand\Sc{\ensuremath{\mathcal{S}}}
\newcommand\Rc{\ensuremath{{\mathcal{R}}}}
\newcommand\Nc{\ensuremath{{\mathcal{N}}}}
\newcommand\Oc{\ensuremath{{\mathcal{O}}}}
\newcommand{\sign}[1]{{\ensuremath{{\rm Sign}\left(#1\right)}}}
\title{Zeroth-Order Optimization Meets Human Feedback: Provable Learning via  Ranking Oracles}
\author{Zhiwei Tang\textsuperscript{\rm 1,3}\thanks{ Correspondence to:
Zhiwei Tang \texttt{zhiweitang1@link.cuhk.edu.cn}}\ \ , Dmitry Rybin\textsuperscript{\rm 2}, Tsung-Hui Chang\textsuperscript{\rm 1,3} \\
School of Science and Engineering\textsuperscript{\rm 1}, School of Data Science\textsuperscript{\rm 2}\\ The Chinese University of Hong Kong, Shenzhen, China\\
 Shenzhen Research Institute of Big Data\textsuperscript{\rm 3}, Shenzhen, China\\
\texttt{\{zhiweitang1,dmitryrybin\}@link.cuhk.edu.cn}\\ \texttt{changtsunghui@cuhk.edu.cn} 
}
\begin{document}

\maketitle

\begin{abstract}
In this study, we delve into an emerging optimization challenge involving a black-box objective function that can only be gauged via a ranking oracle—a situation frequently encountered in real-world scenarios, especially when the function is evaluated by human judges. {Such challenge is inspired from Reinforcement Learning with Human Feedback (RLHF), an approach recently employed to enhance the performance of Large Language Models (LLMs) using human guidance \citep{ouyang2022training,liu2023languages,chatgpt,bai2022training}}.
 We introduce ZO-RankSGD, an innovative zeroth-order optimization algorithm designed to tackle this optimization problem, accompanied by theoretical assurances. Our algorithm utilizes a novel rank-based random estimator to determine the descent direction and guarantees convergence to a stationary point. Moreover, ZO-RankSGD is readily applicable to policy optimization problems in Reinforcement Learning (RL), particularly when only ranking oracles for the episode reward are available. Last but not least, we demonstrate the effectiveness of ZO-RankSGD in a novel application: improving the quality of images generated by a diffusion generative model with human ranking feedback. Throughout experiments, we found that  ZO-RankSGD can significantly enhance the detail of generated images with only a few rounds of human feedback. Overall, our work advances the field of zeroth-order optimization by addressing the problem of optimizing functions with only ranking feedback, and offers a new and effective approach for aligning Artificial Intelligence (AI) with human intentions. 
\end{abstract}

\section{Introduction}

Ranking data is an omnipresent feature of the internet, appearing on a variety of platforms and applications, such as search engines, social media feeds, online marketplaces, and review sites. It plays a crucial role in how we navigate and make sense of the vast amount of information available online. Moreover, ranking information has a unique appeal to humans, as it enables them to express their personal preferences in a straightforward and intuitive way \citep{ouyang2022training, liu2023languages, chatgpt, bai2022training}. The significance of ranking data becomes even more apparent when some objective functions are evaluated through human beings, which is becoming increasingly common in various applications.  Assigning an exact score or rating can often require a significant amount of cognitive burden or domain knowledge, making it impractical for human evaluators to provide precise feedback. In contrast, a ranking-based approach can be more natural and straightforward, allowing human evaluators to express their preferences and judgments with ease \citep{keeney1993decisions}. In this context, our paper makes the first attempt to study an important optimization problem where the objective function can only be accessed via a ranking oracle.

\textbf{Problem formulation. }
With an objective function $f:\Rbb^d\to\Rbb$,  we focus on the optimization problem $\min_{x\in\Rbb^d} f(x)$,
 where $f$ is a black-box function, and we can only query it via a ranking oracle that can sort every input based on the values of $f$. In this work, we focus on a particular family of ranking oracles where only the sorted indexes of top elements are returned. Such oracles are acknowledged to be natural for human decision-making \citep{keeney1993decisions}. We formally define this kind of oracle as follows:

\begin{definition}[$(m,k)$-ranking oracle]
    \label{def:ranking}
    Given a function $f:\Rbb^d\rightarrow \Rbb$ and $m$ points $x_1, ..., x_m$ to query, an $(m, k)$ ranking oracle $O_f^{(m,k)}$ returns $k$ smallest points sorted in their order. For example, if 
    $O_f^{(m,k)}(x_1,...,x_m)= (i_1,...,i_k),$ then  $$f(x_{i_1})\leq f(x_{i_2})\leq ... \leq f(x_{i_k})\leq \min_{j\notin \{i_1,...,i_k\}} f(x_{j}).$$
\end{definition}


\textbf{Applications. }
The optimization problem $\min_{x\in\Rbb^d} f(x)$ with an $(m,k)$-ranking oracle is a common feature in many real-world applications, especially when the objective function $f$ is evaluated by human judges. {One prominent inspiration for this type of problem is the growing field of Reinforcement Learning with Human Feedback (RLHF) \citep{ouyang2022training,liu2023languages,chatgpt,bai2022training}, where human evaluators are asked to rank the outputs of large AI models according to their personal preferences, with an aim to improve the generation quality of these models}. Inspired by these works, in Section \ref{sec:exp}, we propose a similar application in which human feedback is used to enhance the quality of images generated by Stable Diffusion \citep{rombach2022high}, a  text-to-image generative model.  An overview of this application is demonstrated in Figure \ref{fig:overview}. Beyond human feedback, ranking oracles have the potential to be useful in many other applications. For instance, in cases where the exact episode reward in reinforcement learning, or the precise values of the objective function $f$ must remain private, ranking data may provide a more secure and confidential option for data sharing and analysis. This is particularly relevant in sensitive domains, such as healthcare or finance, where the exact value of personal information must be protected.



\begin{figure}[H]
    \centering
    \includegraphics[width=14cm]{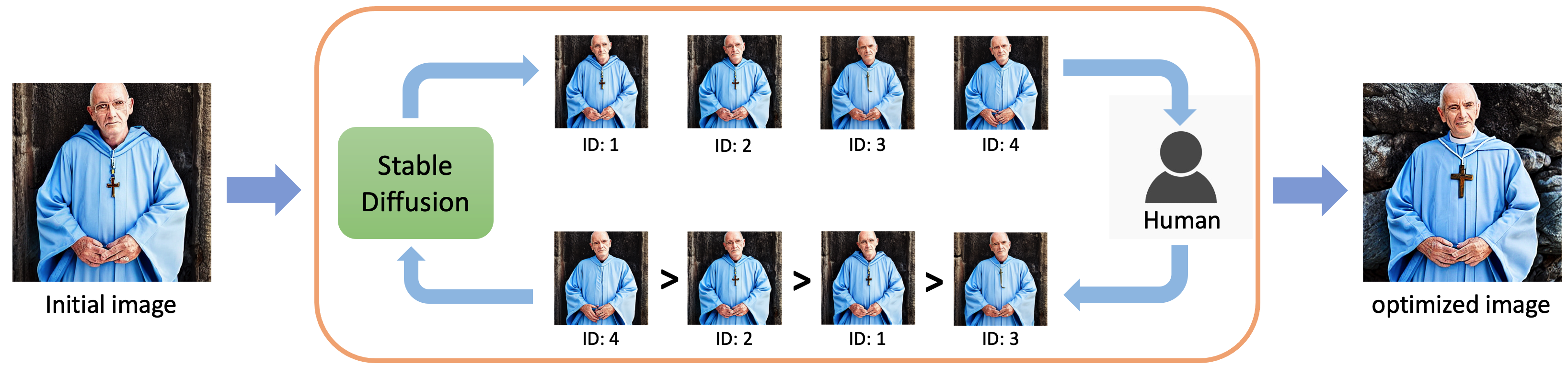}
    \caption{Application of our proposed algorithm on enhancing the quality of images generated from Stable Diffusion  with human ranking feedback. At each iteration of this human-in-the-loop optimization, we use Stable Diffusion to generate multiple images by perturbing the latent embedding with random noise, which are then ranked by humans based on their quality. After that, the ranking information is leveraged  to update the latent embedding. }
    \label{fig:overview}
\end{figure}

\subsection{Related works}

\textbf{Zeroth-Order Optimization. } 
Zeroth-order optimization has been rigorously explored in the optimization literature over several decades \citep{nelder1965simplex,frazier2018tutorial,golovin2019gradientless,nesterov2017random}. Despite this, most existing works make a significant assumption that the value of the objective function is directly accessible—an assumption ill-suited for our context, where only ranking data of the function value is available.
Existing heuristic algorithms like CMA-ES \citep{loshchilov2016cma}, which exclusively rely on ranking information, often lack theoretical guarantees and may underperform in real-world scenarios. A notable exception is the recent study by \citep{cai2022one}, which investigates a setting where a pairwise comparison oracle of the objective function is available. This comparison oracle is indeed a $(2,1)$-ranking oracle, making it a special case within our work's scope.
\citep{cai2022one} attempts to uncover the gradient of the objective function using the 1-bit compressive sensing method. {Beyond \citep{cai2022one}, \citep{yue2009interactively,ding2018preference,kumagai2017regret} also study the use of comparison oracle, but in the context of online bandit optimization. One major problem in all these existing works on comparison oracles is that the underlying objective is confined to be convex/strongly-convex, which is particularly unrealistic in some applications involving human preference.}   Our work, in contrast, contemplates a more general $(m,k)$-ranking oracle and focuses primarily on non-convex functions. Rather than relying on compressive sensing techniques, our work introduces a novel theoretical analysis capable of characterizing the expected convergence behavior of our proposed algorithm.
 


{
\textbf{Bayesian Optimization with Comparison Oracles.} Another relevant topic is Bayesian optimization using pairwise comparison oracles, as demonstrated in \citep{astudillo2020multi} and \citep{lin2022preference}. Compared to the approach studied in this work, their approaches have two key issues. Firstly, unlike gradient-based algorithms, these works lack strong theoretical guarantees for optimization. Moreover, similar to the CMA-ES algorithm \citep{loshchilov2016cma}, Bayesian Optimization faces scalability issues and struggles with high-dimensional optimization, which is not a problem for gradient-based algorithms, as shown in \citep{duchi2015optimal}.
}

\textbf{Reinforcement Learning with Human Feedback (RLHF). }
 The general approach in existing RLHF procedures  involves collecting human ranking data to train a reward model, which is then used to finetune a pre-trained model with policy gradients \citep{ouyang2022training,liu2023languages,chatgpt,bai2022training}. In this work, we explore an alternative setting that fuses reinforcement learning with ranking feedback, where ranking occurs online and is based on the total reward of the entire episode. Our proposed zeroth-order algorithm can be directly employed to optimize the policy within this context.

\textbf{Contributions in this work. }
Our main contributions are summarized as follows:   
\begingroup
\renewcommand\labelenumi{(\theenumi)}
\begin{enumerate}[leftmargin=0.7cm] 
\item \textbf{First rank-based zeroth-order optimization algorithm with theoretical guarantee.} We present a novel method for optimizing objective functions via their ranking oracles. Our proposed algorithm ZO-RankSGD is based on a new rank-based stochastic estimator for descent direction and is proven to converge to a stationary point. Additionally, we provide a rigorous analysis of how various ranking oracles can impact the convergence rate by employing a novel variance analysis.  Last but not least, ZO-RankSGD is also directly applicable to the policy search problem in reinforcement learning with only a ranking oracle of the episode reward available.
\item \textbf{A new method for using human feedback to guide AI models.} ZO-RankSGD offers a fresh and effective strategy for aligning human objectives with AI systems. We demonstrate its utility by applying our algorithm to a novel task: enhancing the quality of images generated by Stable Diffusion with human ranking feedback. We anticipate that our approach will stimulate further exploration of such applications in the field of AI alignment.
\end{enumerate}
\endgroup 

\textbf{Notations. }
For any $x\in\Rbb$, we define the sign operator as $\text{Sign}(x) = 1 \text{ if }x\geq 0$ and $-1$ otherwise, and extend it to vectors by applying it element-wise. For a $d$-dimensional vector $x$, we denote the $d$-dimensional standard Gaussian distribution  by $\Nc(0,I_d)$. The notation $|\Sc|$ refers to the number of elements in the set $\Sc$.

\textbf{Paper organization. }
The rest of this paper is structured as follows: Section \ref{sec:grad_est} introduces how to estimate descent direction based on ranking information, with a theoretical analysis of how different ranking oracles relate to the variance of the estimated direction. Built on the foundations in Section \ref{sec:grad_est}, Section \ref{sec:algo} presents the main algorithm, ZO-RankSGD, along with the corresponding convergence analysis. In Section \ref{sec:exp}, we demonstrate the effectiveness of ZO-RankSGD through various experiments, ranging from synthetic data to real-world applications. Finally, Section \ref{sec:end} concludes the paper by summarizing our findings and suggesting future research directions.

\section{Finding descent direction from the ranking information} \label{sec:grad_est}

\begin{assumption}\label{asp:f} Throughout this paper, we have these assumptions on the function $f$:
\textit{(1)} $f$ is twice continuously differentiable.
\textit{(2)} $f$ is $L$-smooth, meaning that $\|\nabla^2 f(x)\|\leq L$.
\textit{(3)} $f$ is lower bounded by a value $f^*$, that is, $f(x)\geq f^*$ for all $x$.
\end{assumption}

\subsection{A  comparison-based estimator for descent direction}
In contrast to the prior work \citep{cai2022one}, which relies on one-bit compressive sensing to recover the gradient, we propose a simple yet effective estimator for descent direction without requiring solving any compressive sensing problem. Given an objective function $f$ and a point $x$, we estimate the descent direction of $f$ using two independent Gaussian random vectors $\xi_1$ and $\xi_2$ as follows: 
\begin{align}
    \label{eq:grad_est}
    \hat g(x) = S_f(x,\xi_1,\xi_2,\mu)(\xi_1-\xi_2), 
\end{align} where $\mu>0$ is a constant, and $S_f(x,\xi_1,\xi_2,\mu):\Rbb^d\times \Rbb^d\times \Rbb^d\times \Rbb_+\to \{1,-1\}$ is defined as:
\begin{align}
S_f(x,\xi_1,\xi_2,\mu) \stackrel{\text{def.}}= \sign{(f(x+\mu\xi_{1})-f(x+\mu\xi_{2}))}.
\end{align}
We prove in Lemma \ref{lemma:gradient}, which is one of the most important technical tools in this work,  that $\hat g(x)$ is an effective estimator for descent direction. 
\begin{lemma}
\label{lemma:gradient} For any $x\in\Rbb^d$, we have
\begin{align} 
    \left\langle\nabla f(x),\Ebb[\hat g(x)]\right\rangle\geq \|\nabla f(x)\| - C_d\mu L,
\end{align} where $C_d\geq 0$ is some constant that only depends on $d$.
\end{lemma}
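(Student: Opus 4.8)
The plan is to collapse the $d$-dimensional inner product to a one-dimensional Gaussian quantity and then control the curvature of $f$ through an $L$-smoothness (Taylor) argument. Writing $g=\nabla f(x)$ and $A=\langle g,\xi_1-\xi_2\rangle$, the definition gives $\langle g,\hat g(x)\rangle = S_f(x,\xi_1,\xi_2,\mu)\,A$. The guiding idea is that $S_f$ is, up to a controllable error, the sign of the linear term $A$; when the signs agree, $S_f\cdot A=|A|$, whose expectation is a positive multiple of $\|g\|$. Thus the whole argument reduces to (i) showing $S_f$ agrees with $\sign{A}$ except on a small event governed by the Taylor remainder, and (ii) evaluating two Gaussian moments.

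First I would Taylor expand each queried value around $x$ using twice continuous differentiability: for $i\in\{1,2\}$, $f(x+\mu\xi_i)=f(x)+\mu\langle g,\xi_i\rangle+\tfrac{\mu^2}{2}\xi_i^{\top}\nabla^2 f(\tilde x_i)\xi_i$ for some $\tilde x_i$ on the segment between $x$ and $x+\mu\xi_i$. Subtracting the two expansions yields $f(x+\mu\xi_1)-f(x+\mu\xi_2)=\mu A + R$, where the remainder obeys $|R|\leq \tfrac{\mu^2 L}{2}\big(\|\xi_1\|^2+\|\xi_2\|^2\big)$ by the bound $\|\nabla^2 f\|\leq L$ from Assumption~\ref{asp:f}. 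Consequently $S_f=\sign{\mu A + R}$.

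The core step is the sign-flip analysis. Since $\mu>0$, we have $\sign{\mu A}=\sign{A}$, so $S_f\cdot A=|A|$ unless $\mu A+R$ has the opposite sign to $A$; such a flip forces $|R|\geq \mu|A|$, i.e. $|A|\leq |R|/\mu$. On the flip event $S_f\cdot A=-|A|$, and the loss relative to $|A|$ is at most $2|A|\leq 2|R|/\mu$. This gives the pointwise bound
\[
S_f(x,\xi_1,\xi_2,\mu)\,\langle \nabla f(x),\xi_1-\xi_2\rangle \;\geq\; \big|\langle \nabla f(x),\xi_1-\xi_2\rangle\big| - \frac{2}{\mu}|R|,
\]
which holds for every realization of $\xi_1,\xi_2$ (the boundary case $A=0$ or $\mu A+R=0$ is measure zero and harmless).

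Finally I would take expectations of the display termwise. For the main term, $\xi_1-\xi_2\sim\Nc(0,2I_d)$, so $A$ is centered Gaussian with variance $2\|g\|^2$ and $\Ebb|A|=\tfrac{2}{\sqrt\pi}\|g\|\geq \|g\|$. For the error term, $\Ebb|R|\leq \tfrac{\mu^2 L}{2}\,\Ebb[\|\xi_1\|^2+\|\xi_2\|^2]=\mu^2 L d$, hence $\tfrac{2}{\mu}\Ebb|R|\leq 2d\mu L$. Combining yields $\langle \nabla f(x),\Ebb[\hat g(x)]\rangle\geq \|\nabla f(x)\|-2d\mu L$, which is the claim with $C_d=2d$. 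I expect the main obstacle to be the sign-flip step: correctly identifying that a flip can happen only when $|R|\geq\mu|A|$ and verifying that the resulting $-|A|$ contribution is absorbed into $\tfrac{2}{\mu}|R|$ rather than destroying the bound; the remaining Gaussian moment computations are routine.
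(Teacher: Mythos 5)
Your proof is correct, and it takes a genuinely different and more elementary route than the paper's. The paper also reduces to bounding $\Ebb[S_f\cdot A]$ with $A=\langle\nabla f(x),\xi_1-\xi_2\rangle$, but it then partitions the half-space $\{A>0\}$ into a ``good'' region and a sign-flip region, encloses the flip region in the explicit ball $\bar\Rc_{11}$ determined by the $L$-smoothness bound, recognizes its complement as a Euclidean ball in $\Rbb^{2d}$, and evaluates the resulting integral in closed form via the CDF of a $\chi^2_{2d-1}$ distribution (the function $h(v,r,d)$), finally extracting the linear lower bound $h(v,r,d)\geq(\tfrac{1}{2\sqrt\pi}+\tfrac14)v-\tfrac14 C_d r$ through a limiting argument that leaves $C_d$ non-explicit. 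Your pointwise inequality $S_f\cdot A\geq |A|-\tfrac{2}{\mu}|R|$ --- justified by the observation that a sign flip forces $|R|\geq\mu|A|$, so the loss $2|A|$ is absorbed into $\tfrac{2}{\mu}|R|$ --- replaces all of that geometric and distributional machinery with two routine Gaussian moments, $\Ebb|A|=\tfrac{2}{\sqrt\pi}\|\nabla f(x)\|\geq\|\nabla f(x)\|$ and $\Ebb|R|\leq\mu^2Ld$. What your argument buys is brevity and an explicit constant $C_d=2d$ (the paper's implicit $C_d$ is also essentially linear in $d$, since it is governed by where $F_{2d-1}$ saturates, so you lose nothing in the dimension dependence); what the paper's approach buys is an exact closed form for the integral over the non-flip region, which is more information than the lemma actually needs. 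One cosmetic note: your pointwise bound in fact holds for every realization, including the boundary cases $A=0$ and $\mu A+R=0$ under the paper's convention $\sign{0}=1$, so the measure-zero caveat is not even needed.
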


   Denote $\gamma>0$ as the step size. With the $L$-smoothness of $f$ and Lemma \ref{lemma:gradient}, we can show that

\begin{align}
    \label{eq:descent}
\underset{\xi_1,\xi_2}{\Ebb}[f(x-\gamma \hat g(x))]-f(x)&\leq -\gamma \left\langle\nabla f(x),\Ebb[\hat g(x)]\right\rangle + \frac{\gamma^2L}{2}E\left[\|\hat g(x)\|^2\right]\notag\\
    &\leq -\gamma\|\nabla f(x)\|+\gamma C_d \mu L + \gamma^2Ld,
\end{align}

where we note that $\Ebb[\|\hat g(x)\|^2]=\Ebb[\|\xi_1-\xi_2\|^2]=2d$. Therefore, whenever $\|\nabla f(x)\|\neq 0$, the value $\Ebb_{\xi_1,\xi_2}[f(x-\gamma \hat g(x))]$ would be strictly smaller than $f(x)$ with sufficiently small $\gamma$ and $\mu$. More importantly, unlike the comparison-based gradient estimator proposed in \citep{cai2022one},  our estimator \eqref{eq:grad_est} can be directly incorporated with ranking oracles, as we will see in the next section.



\begin{figure}[htbp]
    \centering
    \includegraphics[width=0.3\textwidth]{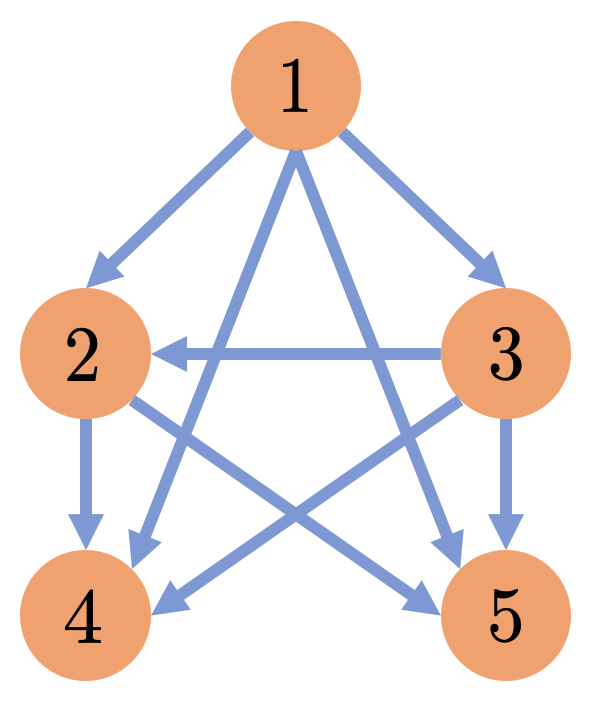}
    \caption{\small The corresponding DAG for the ranking result $O_f^{(5,3)}(x_1,x_2,x_3,x_4$ $,x_5)=(1,3,2)$.}
    \label{fig:DAG}
\end{figure}

\subsection{From ranking information to pairwise comparison}
\label{sec:ranking}


We first observe that ranking information can be translated into pairwise comparisons. For instance, knowing that $x_1$ is the best among ${x_1,x_2,x_3}$ can be represented using two pairwise comparisons: {\it $x_1$ is better than $x_2$} and {\it $x_1$ is better than $x_3$}. Therefore, we  propose to represent the input and output of $(m,k)$-ranking oracles as a directed acyclic graph (DAG), $\Gc = (\Nc,\Ec)$, where the node set $\Nc = \{1,\ldots,m\}$ and the directed edge set $\Ec = \{(i,j) \mid f(x_i) < f(x_j)\}$. An example of such a DAG is shown in Figure \ref{fig:DAG}. Given access to an $(m,k)$-ranking oracle $O_f^{(m,k)}$ and a starting point $x$, we  query $O_f^{(m,k)}$ with the inputs $x_i = x + \mu \xi_i$, $\xi_i \sim \Nc(0,I_d)$, for $i=1,\ldots,m$. With the graph $\Gc$ constructed from the ranking information of $O_f^{(m,k)}$, we propose the following rank-based gradient  estimator:
\begin{align}
    \label{eq:rank_grad_est}
    \tilde g(x) =\frac{1}{|\Ec|} \sum_{(i,j)\in\Ec}\frac{x_j-x_i}{\mu}=\frac{1}{|\Ec|} \sum_{(i,j)\in\Ec}(\xi_j-\xi_i).
\end{align}

\begin{remark}
    Notice that \eqref{eq:rank_grad_est} can be simply expressed as a linearly weighted combination of $\xi_1,...,\xi_m$. We provide the specific form in Appendix \ref{app:rank_grad_est}.
\end{remark} 

We note that \eqref{eq:grad_est} is a special case of \eqref{eq:rank_grad_est} with $m=2$ and $k=1$, and it can be easily shown that $\Ebb[\tilde g(x)] = \Ebb[\hat g(x)]$ and $\Ebb[\|\tilde g(x)\|^2]\leq \Ebb[\|\hat g(x)\|^2]$, indicating that the benefit of using ranking information over a single comparison is a reduced variance of the gradient estimator. However, to determine the extent of variance reduction, we must examine the graph topology of $\Gc$.

\textbf{Graph topology of $\Gc$. }
The construction of the DAG $\Gc$ described above reveals that the graph topology of $\Gc$ is uniquely determined by $m$ and $k$. There are two important statistics in this graph topology. 
The first one is the number of edges $|\Ec|$, which is related to the number of pairwise comparisons, extracted from the ranking result. In the precedent work \citep{cai2022one}, the number of pairwise comparisons can be used to determine the variance of the gradient estimator. However, this is insufficient for our case, as the pairwise comparisons in \eqref{eq:rank_grad_est} are not independent. Therefore, we require the second statistic of the DAG, which is the number of neighboring edge pairs in $\Ec$. We define a neighboring edge pair as a pair of edges that share the same node. For instance, in Figure \ref{fig:DAG}, one neighboring edge pair is $(x_1,x_3)$ and $(x_1,x_2)$. We denote this number as $N(\Ec)$ and define it formally as $ N(\Ec)\stackrel{\text{def.}} =| \{\left((i,j),({i'},{j})\right) \in \bar\Ec \times \bar\Ec |\; i \neq i'\} |,
$ where $\bar\Ec$ is the undirected copy of $\Ec$, i.e,  $(i,j)\in\bar\Ec$ if and if only $(j,i)$ in $\Ec$ or $(i,j)$ in $\Ec$. As mentioned, the graph topology of $\Gc$ is determined by $m$ and $k$. Therefore, we can analytically compute $|\Ec|$ and $N(\Ec)$ using $m$ and $k$. We state these calculations in the following lemma:

\begin{lemma}
    \label{lm:metric_st_close_form} 
    Let $\Gc=(\Nc,\Ec)$ be the DAG constructed from the ranking information of $O_f^{(m,k)}$. Then, 
    \begin{align}
        |\Ec| &= km-(k^2+k)/2,\\
        N(\Ec) &= m^2k+mk
        ^2-k^3+k^2-4mk+2k.
    \end{align} 
\end{lemma}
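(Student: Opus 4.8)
The plan is to exploit the fact, already noted in the text, that the topology of $\Gc$ is fixed once $m$ and $k$ are fixed, so both quantities can be obtained by a direct combinatorial count that is independent of the particular realization of the query points. The first step is to pin down exactly which directed edges the oracle certifies. By Definition \ref{def:ranking}, an $(m,k)$-ranking oracle reveals (i) the full total order among the $k$ returned points $x_{i_1},\ldots,x_{i_k}$, and (ii) that each of these $k$ points is no larger than every one of the remaining $m-k$ points; it reveals nothing about the relative order of those remaining points. Hence, up to relabeling, $\Ec$ is a transitive tournament on the $k$ ``top'' nodes together with all edges directed from each top node to each of the $m-k$ ``bottom'' nodes, with no edges among the bottom nodes.

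From this description the edge count is immediate: there are $\binom{k}{2}$ edges inside the top group and $k(m-k)$ edges from top to bottom, so
\begin{align*}
|\Ec| = \binom{k}{2} + k(m-k) = \frac{k^2-k}{2} + km - k^2 = km - \frac{k^2+k}{2},
\end{align*}
which is the first claimed identity.

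For $N(\Ec)$ I would first recast its definition in degree form. Since $\bar\Ec$ is the symmetric (undirected) copy of $\Ec$, the set counted by $N(\Ec)$ is precisely the set of ordered pairs of undirected edges sharing a common endpoint but having distinct other endpoints; grouping such pairs by their shared vertex $v$ gives $N(\Ec) = \sum_{v\in\Nc}\deg(v)\bigl(\deg(v)-1\bigr)$, where $\deg$ is the degree in $\bar\Gc$. The remaining task is to read off the degree sequence from the structure above: each of the $k$ top nodes is joined to the other $k-1$ top nodes and to all $m-k$ bottom nodes, for degree $m-1$, while each of the $m-k$ bottom nodes is joined only to the $k$ top nodes, for degree $k$. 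Substituting,
\begin{align*}
N(\Ec) = k\,(m-1)(m-2) + (m-k)\,k(k-1),
\end{align*}
and expanding collapses this to $m^2k + mk^2 - k^3 + k^2 - 4mk + 2k$, the second identity.

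The calculation has no genuinely hard step; the only care needed is in the first paragraph, namely correctly determining which pairwise comparisons the oracle actually certifies (in particular that the bottom $m-k$ points contribute no internal edges, so their degree is only $k$), and in translating the combinatorial definition of $N(\Ec)$ into the clean degree-sum $\sum_v \deg(v)(\deg(v)-1)$. As a sanity check, the example $O_f^{(5,3)}$ of Figure \ref{fig:DAG} has degree sequence $(4,4,4,3,3)$, yielding $|\Ec|=9$ and $N(\Ec)=3\cdot 12 + 2\cdot 6 = 48$, in agreement with both formulas.
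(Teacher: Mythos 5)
Your proof is correct, and for the harder half of the lemma it takes a cleaner route than the paper. The edge count is identical in both arguments: split the nodes into the $k$ ranked ``top'' nodes and the $m-k$ ``bottom'' nodes, observe that the top group induces a complete graph and that every top--bottom pair is an edge while no bottom--bottom pair is, and add $\binom{k}{2}+k(m-k)$. For $N(\Ec)$, however, the paper partitions the set of neighboring ordered edge pairs $\left((i,j),(i',j)\right)$ into five subsets according to which of $i$, $i'$, $j$ lie in the top or bottom group, counts each subset separately, and sums. You instead collapse the definition into the degree identity $N(\Ec)=\sum_{v}\deg(v)\left(\deg(v)-1\right)$ (grouping the ordered pairs by their shared endpoint $j$), read off the degree sequence --- $k$ nodes of degree $m-1$ and $m-k$ nodes of degree $k$ --- and expand $k(m-1)(m-2)+(m-k)k(k-1)$. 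This is a genuinely different decomposition of the same count: the paper's five-case split is more mechanical and arguably easier to audit case by case, while your degree-sum identity is shorter, less error-prone, and generalizes immediately to any ranking oracle whose induced DAG has a computable degree sequence (a direction the paper itself gestures at when it notes the variance analysis extends beyond $(m,k)$-oracles). Your identification of exactly which comparisons the oracle certifies, and your numerical check against the $(5,3)$ example, both match the paper's construction.
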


\textbf{Variance analysis of \eqref{eq:rank_grad_est} based on the graph topology. }
To analyze the variance of the estimator \eqref{eq:rank_grad_est}, we introduce two important metrics $M_1(f,\mu)$ and $M_2(f,\mu)$ on the function $f$.
    \begin{definition}
        \begin{align}
            &M_1(f,\mu) \stackrel{\text{def.}}=\max_x\left\|\underset{\xi_1,\xi_2}{\Ebb}\left[S_f(x,\xi_1,\xi_2,\mu) (\xi_1-\xi_2)\right]\right\|^2,\\
            &M_2(f,\mu) \stackrel{\text{def.}}=\max_x\underset{\xi_1,\xi_2,\xi_3}{\Ebb}\left[S_f(x,\xi_1,\xi_2,\mu) S_f(x,\xi_1,\xi_3,\mu) \langle \xi_1-\xi_2, \xi_1-\xi_3\rangle\right],
        \end{align} where $\xi_1$, $\xi_2$ and $\xi_3$ are three independent random vectors drawn from $\Nc(0,I_d)$.
    \end{definition}
    
  We also   provide some useful upper bounds on $M_1(f,\mu)$ and $M_2(f,\mu)$ in Lemma \ref{lm:bound4metrics}, which help  to understand the scale of these two quantities.
    \begin{lemma}
        \label{lm:bound4metrics} For any function $f$ and $\mu>0$, we have
       $M_1(f,\mu) \leq 2d$, $M_2(f,\mu) \leq 2d$.   
        Moreover, if $f$ satisfies that $\nabla^2 f(x)=cI_d$ where $c\in\Rbb$ is some constant, we have  $M_1(f,\mu) \leq {32}/{{\pi}}$. 
    \end{lemma}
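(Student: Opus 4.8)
My plan is to treat the three estimates separately: the two ``$\le 2d$'' bounds are soft consequences of Jensen's inequality (after the right conditioning for $M_2$), whereas the dimension-free bound rests on a single duality-style estimate that, somewhat surprisingly, needs no curvature hypothesis at all and already delivers a constant better than $32/\pi$. Throughout I write $\eta=\xi_1-\xi_2\sim\Nc(0,2I_d)$ and abbreviate $S_f=S_f(x,\xi_1,\xi_2,\mu)\in\{1,-1\}$, so that $S_f^2=1$.

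\textbf{The ``$\le 2d$'' bounds.} For $M_1$, Jensen's inequality applied to $z\mapsto\|z\|^2$ gives, for every $x$,
\[
\left\|\Ebb[S_f\,\eta]\right\|^2\le\Ebb\!\left[S_f^2\,\|\eta\|^2\right]=\Ebb\!\left[\|\eta\|^2\right]=2d .
\]
For $M_2$ the obstacle is that the two sign factors both involve $\xi_1$, so they are not independent; I would remove this by conditioning on $\xi_1$. Given $\xi_1$, the draws $\xi_2$ and $\xi_3$ are i.i.d.\ and independent, so the conditional expectation of the integrand factorizes and, writing $g(\xi_1):=\Ebb_{\xi_2}\!\left[S_f(x,\xi_1,\xi_2,\mu)(\xi_1-\xi_2)\mid\xi_1\right]$, it equals $\langle g(\xi_1),g(\xi_1)\rangle=\|g(\xi_1)\|^2$. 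Hence $M_2(f,\mu)=\max_x\Ebb_{\xi_1}\|g(\xi_1)\|^2\ge0$, and bounding $\|g(\xi_1)\|^2$ by $\Ebb_{\xi_2}[\|\xi_1-\xi_2\|^2\mid\xi_1]$ through the same Jensen step and taking the outer expectation yields $M_2(f,\mu)\le2d$.

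\textbf{The dimension-free bound.} Fix $x$ and set $v:=\Ebb[S_f\,\eta]$, the deterministic vector whose squared norm is being maximized. The key step is to write $\|v\|^2$ as an expectation and use only $|S_f|=1$:
\[
\|v\|^2=\big\langle v,\Ebb[S_f\,\eta]\big\rangle=\Ebb\!\left[S_f\,\langle v,\eta\rangle\right]\le\Ebb\!\left[\,|\langle v,\eta\rangle|\,\right].
\]
Since $\eta\sim\Nc(0,2I_d)$ and $v$ is fixed, $\langle v,\eta\rangle\sim\Nc(0,2\|v\|^2)$, so the half-normal identity gives $\Ebb|\langle v,\eta\rangle|=2\|v\|/\sqrt{\pi}$. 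Combining, $\|v\|^2\le 2\|v\|/\sqrt{\pi}$, i.e.\ $\|v\|\le2/\sqrt{\pi}$ and $M_1(f,\mu)\le4/\pi\le32/\pi$. Notably this argument uses neither the quadratic structure nor any bound on $\mu$, so it proves the stated $32/\pi$ a fortiori (and is tight for affine $f$). For the quadratic case it is nonetheless illuminating to see \emph{why} a dimension-free constant is natural: writing $f(y)=a+\langle b,y\rangle+\tfrac{c}{2}\|y\|^2$ and expanding gives $f(x+\mu\xi_1)-f(x+\mu\xi_2)=\mu\langle\nabla f(x),\eta\rangle+\tfrac{c\mu^2}{2}(\|\xi_1\|^2-\|\xi_2\|^2)$; passing to the independent Gaussians $\eta=\xi_1-\xi_2$ and $\zeta=\xi_1+\xi_2$ and using $\|\xi_1\|^2-\|\xi_2\|^2=\langle\eta,\zeta\rangle$ collapses the sign to $S_f=\sign{\langle\eta,u\rangle}$ with $u=\nabla f(x)+\tfrac{c\mu}{2}\zeta$, which conditionally on $\zeta$ reduces to the exactly solvable linear case.

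\textbf{Main obstacle.} The step I expect to look delicate in a naive attempt is the quadratic remainder $\tfrac{c\mu^2}{2}(\|\xi_1\|^2-\|\xi_2\|^2)$ inside the sign, which couples the two Gaussian draws and appears to obstruct the rotational-symmetry computation that trivializes the affine case. The point of the proposal is that this obstacle is only apparent: for $M_1$ the duality estimate above bypasses the nonlinearity of the sign entirely (using just $|S_f|=1$ and Gaussianity of $\langle v,\eta\rangle$), and for $M_2$ the only genuine issue—the shared variable $\xi_1$—is dissolved by conditioning. The single quantitative input is the half-normal moment $\Ebb|\Nc(0,2)|=2/\sqrt{\pi}$.
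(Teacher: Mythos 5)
Your proof is correct, and for the dimension-free bound it takes a genuinely different --- and stronger --- route than the paper. For $M_1\le 2d$ you and the paper both use Jensen. For $M_2\le 2d$ the paper applies Cauchy--Schwarz directly to the triple expectation, whereas you condition on $\xi_1$ to factorize the integrand into $\|g(\xi_1)\|^2$ and then apply conditional Jensen; both are valid, and your version additionally shows $M_2\ge 0$ for free. The real divergence is in the last claim. The paper's argument is a multi-step computation specific to $\nabla^2 f=cI_d$: it rewrites the mean vector as $4\int\xi_1\bigl(\int_{f(x+\mu\xi_1)>f(x+\mu\xi_2)}p(\xi_2)d\xi_2\bigr)p(\xi_1)d\xi_1$, exploits the reflection symmetry $\xi_1\mapsto \frac{2\langle\nabla f(x),\xi_1\rangle}{\|\nabla f(x)\|^2}\nabla f(x)-\xi_1$ of the quadratic level sets to show this vector is parallel to $\nabla f(x)$, and bounds the scalar coefficient by $\sqrt{2/\pi}$, yielding $16\cdot 2/\pi=32/\pi$. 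Your duality estimate
\begin{align*}
\|v\|^2=\Ebb\bigl[S_f\langle v,\eta\rangle\bigr]\le \Ebb\bigl[\lvert\langle v,\eta\rangle\rvert\bigr]=\tfrac{2}{\sqrt{\pi}}\|v\|
\end{align*}
uses only $S_f\in\{1,-1\}$ and the Gaussianity of $\langle v,\eta\rangle$, and gives $M_1(f,\mu)\le 4/\pi$ for \emph{every} $f$ and $\mu$, with no curvature hypothesis --- strictly improving both the $2d$ bound and the $32/\pi$ bound of the lemma (and it is attained for affine $f$, as you note). The paper's route buys an explicit identification of the direction of $\Ebb[S_f\eta]$ in the isotropic quadratic case, but as a bound on $M_1$ it is dominated by yours; your observation could be used to sharpen the non-vanishing variance term in Lemma \ref{lm:rank_grad_est_var} and Theorem \ref{thm:ZOtopk} to a dimension-free constant in full generality.
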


With $M_1(f,\mu)$ and $M_2(f,\mu)$, we can bound the second order moment of \eqref{eq:rank_grad_est} as shown in  Lemma \ref{lm:rank_grad_est_var}.
\begin{lemma} 
    \label{lm:rank_grad_est_var}
    For any $x\in\Rbb^d$, we have
    
    \begin{align}
        \label{eq:rank_grad_est_var}
        \Ebb[\|\tilde g(x)\|^2]\leq\frac{2d}{|\Ec|}  + \frac{N(\Ec)}{|\Ec|^2} M_2(f,\mu) + M_1(f,\mu).
    \end{align}
\end{lemma}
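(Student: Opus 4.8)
The plan is to expand the squared norm of the estimator \eqref{eq:rank_grad_est} into a double sum over pairs of edges of $\Gc$ and to control each pair according to how the two edges sit in the graph. First I would rewrite each edge's contribution in a sign-symmetric form so that it matches the integrands defining $M_1$ and $M_2$. For a directed edge $(i,j)\in\Ec$ we have $f(x_i)<f(x_j)$, hence $S_f(x,\xi_j,\xi_i,\mu)=+1$ and therefore $\xi_j-\xi_i = S_f(x,\xi_j,\xi_i,\mu)(\xi_j-\xi_i)$. Setting $h_{ij}\stackrel{\text{def.}}= S_f(x,\xi_i,\xi_j,\mu)(\xi_i-\xi_j)$, this vector is invariant under swapping the two endpoints (the sign flip cancels the sign flip of the difference), so each undirected edge $e$ carries a well-defined vector $h_e$ with the same distribution as $\hat g(x)$, and $\tilde g(x)=\frac{1}{|\Ec|}\sum_e h_e$. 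Expanding,
\begin{align}
\Ebb[\|\tilde g(x)\|^2] = \frac{1}{|\Ec|^2}\sum_{e}\Ebb[\|h_e\|^2] + \frac{1}{|\Ec|^2}\sum_{e\neq e'}\Ebb[\langle h_e,h_{e'}\rangle],
\end{align}
where the sums run over the $|\Ec|$ edges. Since $S_f^2=1$, the diagonal terms satisfy $\Ebb[\|h_e\|^2]=\Ebb[\|\xi_1-\xi_2\|^2]=2d$, contributing $2d/|\Ec|$.

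For the cross terms I would split the ordered pairs of distinct edges $(e,e')$ into two classes. If $e$ and $e'$ share a common vertex $i$, then using the endpoint symmetry I can write both factors with the shared vertex in the first argument of $S_f$, i.e. $h_e=S_f(x,\xi_i,\xi_j,\mu)(\xi_i-\xi_j)$ and $h_{e'}=S_f(x,\xi_i,\xi_{j'},\mu)(\xi_i-\xi_{j'})$ with $\xi_i,\xi_j,\xi_{j'}$ independent. Their expected inner product is then exactly the integrand defining $M_2(f,\mu)$ at the current $x$ (with $\xi_1=\xi_i$, $\xi_2=\xi_j$, $\xi_3=\xi_{j'}$), hence at most $M_2(f,\mu)$. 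The number of ordered neighboring pairs is precisely $N(\Ec)$ by its definition through $\bar\Ec$, so these terms contribute at most $N(\Ec)M_2(f,\mu)/|\Ec|^2$.

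If instead $e$ and $e'$ are vertex-disjoint, then $h_e$ and $h_{e'}$ are functions of disjoint sets of independent Gaussians, so the expectation factorizes: $\Ebb[\langle h_e,h_{e'}\rangle]=\langle \Ebb[h_e],\Ebb[h_{e'}]\rangle=\|\Ebb[\hat g(x)]\|^2\leq M_1(f,\mu)$, using that every $\Ebb[h_e]$ equals $\Ebb[\hat g(x)]$. Bounding the number of such pairs crudely by $|\Ec|^2$ contributes at most $M_1(f,\mu)$. Adding the diagonal, neighboring, and disjoint pieces, and discarding the negative correction to the coefficient of $M_1$ (which is $\frac{|\Ec|^2-|\Ec|-N(\Ec)}{|\Ec|^2}\leq 1$), yields the claimed inequality.

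I expect the main obstacle to be the neighboring-edge term: one must verify that the sign convention makes $h_e$ genuinely symmetric, so that the shared vertex can always be routed into the $\xi_1$ slot of both $S_f$ factors, producing exactly the $M_2$ integrand rather than some other correlation; and one must confirm that the combinatorial count of ordered neighboring pairs matches $N(\Ec)$ as defined through the undirected copy $\bar\Ec$ (Lemma \ref{lm:metric_st_close_form} then supplies its closed form). The disjoint case and the diagonal are routine once this sign-symmetric rewriting is in place.
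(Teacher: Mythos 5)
Your proposal is correct and follows essentially the same route as the paper's proof: the sign-symmetric vector $h_e$ is exactly the paper's $B_{(i,j)}=\xi_j-\xi_i=S_f(x,\xi_i,\xi_j,\mu)(\xi_i-\xi_j)$, and the decomposition into diagonal terms ($2d$ each), neighboring-edge pairs (bounded by $M_2$, counted by $N(\Ec)$ via $\bar\Ec$), and vertex-disjoint pairs (factorizing to $\|\Ebb[\hat g(x)]\|^2\leq M_1$) matches the paper's expansion, including the final step of discarding the nonnegative correction $\frac{|\Ec|^2-N(\Ec)-|\Ec|}{|\Ec|^2}\leq 1$ on the $M_1$ coefficient.
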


\textbf{Discussion on Lemma \ref{lm:rank_grad_est_var}. } With Lemma \ref{lm:metric_st_close_form} and Lemma \ref{lm:bound4metrics}, we observe that the first variance term in \eqref{eq:rank_grad_est_var}, namely,  $\frac{2d}{|\Ec|}$, is $\Oc(\frac{1}{km})$, and thus vanishes as $m\to\infty$. In contrast, the second variance term $\frac{N(\Ec)}{|\Ec|^2} M_2(f,\mu)$ does not disappear as $m$ grows, because 
\begin{align}
    \lim_{m\to\infty}\frac{N(\Ec)}{|\Ec|^2}  =\lim_{m\to\infty} \frac{m^2k+mk^2-k^3+k^2-4mk+2k}{\left(km-(k^2+k)/2\right)^2} = \frac{1}{k},
\end{align} 
and thus only vanishes when both $k$ and $m$ tend to infinity. However, there is a non-diminishing term $M_1(f,\mu)$ remaining in \eqref{eq:rank_grad_est_var}. Fortunately, as shown in Lemma \ref{lm:bound4metrics}, $M_1(f,\mu)$ is smaller than $2d$ and can be bounded by a dimension-independent constant for a certain family of quadratic functions.
Finally, it is worth noting that our approach for the variance analysis can be directly extended to any ranking oracles beyond the $(m,k)$-ranking oracle.

\section{ZO-RankSGD: Zeroth-Order Rank-based Stochastic Gradient Descent}\label{sec:algo}

With all of our findings in Sections \ref{sec:grad_est}, now we are ready to introduce our proposed algorithm, ZO-RankSGD. The pseudocode for ZO-RankSGD is outlined in Algorithm \ref{alg:ZO-RankSGD}.

\begin{algorithm}[htbp]
    \small  
        \begin{algorithmic}[1]
          \REQUIRE Initial point $x_0$, stepsize $\eta$, number of iterations $T$, smoothing parameter $\mu$, $(m,k)$-ranking oracle $O_f^{(m,k)}$.
        \FOR{$t=1$ to $T$}
    
        \STATE Sample $m$ i.i.d. random vectors $\{\xi_{(t,1)},\cdots,\xi_{(t,m)}\}$  from $N(0, I_d)$.
        \STATE Query the $(m,k)$-ranking oracle $O_f^{(m,k)}$ with input $\{x_{t-1}+\mu\xi_{(t,1)},\cdots,x_{t-1}+\mu\xi_{(t,m)}\}$, and constuct the corresponding DAG $\Gc=(\Nc,\Ec)$ as described in Section \ref{sec:ranking}.
        \STATE Compute the gradient estimator  using:
       $g_t = \frac{1}{|\Ec|} \sum_{(i,j)\in\Ec}(\xi_{(t,j)}-\xi_{(t,i)})$
        \STATE $x_t = x_{t-1} - \eta g_t$.
        \ENDFOR
        \end{algorithmic}
        \caption{ZO-RankSGD}
        \label{alg:ZO-RankSGD}
\end{algorithm}

\subsection{Theoretical guarantee of ZO-RankSGD} \label{sec:theory} 

    Now we present the convergence result of Algorithm \ref{alg:ZO-RankSGD} in the following Theorem \ref{thm:ZOtopk}.

\begin{theorem}
    \label{thm:ZOtopk}
    For any $\eta>0$, $\mu>0$, $T\in\Nbb$, after running Algorithm \ref{alg:ZO-RankSGD} for $T$ iterations, we have:
    {\small \begin{align}
    \label{eq:core_bound}
    \Ebb\left[\min_{t\in\{1,...,T\}} \|\nabla f(x_{t-1})\|\right]\leq \frac{f(x_{0}) - f^*}{\eta T} + C_d\mu L + \frac{\eta L}{2}\left(\frac{2d}{|\Ec|}  + \frac{N(\Ec)}{|\Ec|^2} M_2(f,\mu) + M_1(f,\mu)\right),
    \end{align}} 
    
    where $C_d$ is some constant that only depends on $d$.
\end{theorem}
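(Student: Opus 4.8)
The plan is to carry out the standard stochastic descent-lemma argument, feeding the two results of Section~\ref{sec:grad_est} into the one-step inequality already previewed in~\eqref{eq:descent}. Concretely, I would fix an iteration $t$ and condition on the iterate $x_{t-1}$. Since the sampling $\{\xi_{(t,1)},\dots,\xi_{(t,m)}\}$ at step $t$ is drawn independently of everything generated in steps $1,\dots,t-1$, the conditional estimator $g_t$ is exactly $\tilde g(x_{t-1})$ from~\eqref{eq:rank_grad_est} with $x=x_{t-1}$ held fixed. Thus the fixed-point statements of Lemma~\ref{lemma:gradient} (transferred through the identity $\Ebb[\tilde g(x)]=\Ebb[\hat g(x)]$) and Lemma~\ref{lm:rank_grad_est_var} apply verbatim to the conditional expectation.

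Applying $L$-smoothness to the update $x_t=x_{t-1}-\eta g_t$ gives
\begin{align}
\Ebb\!\left[f(x_t)\mid x_{t-1}\right] \leq f(x_{t-1}) - \eta\left\langle \nabla f(x_{t-1}),\Ebb[g_t\mid x_{t-1}]\right\rangle + \frac{\eta^2 L}{2}\,\Ebb\!\left[\|g_t\|^2\mid x_{t-1}\right].
\end{align}
Into this I substitute the lower bound $\langle \nabla f(x_{t-1}),\Ebb[g_t\mid x_{t-1}]\rangle \geq \|\nabla f(x_{t-1})\| - C_d\mu L$ from Lemma~\ref{lemma:gradient} and the upper bound $\Ebb[\|g_t\|^2\mid x_{t-1}]\leq V$ from Lemma~\ref{lm:rank_grad_est_var}, where $V \stackrel{\text{def.}}= \frac{2d}{|\Ec|}+\frac{N(\Ec)}{|\Ec|^2}M_2(f,\mu)+M_1(f,\mu)$. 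Rearranging isolates $\eta\|\nabla f(x_{t-1})\|$ on one side.

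Taking total expectations, summing the resulting inequality over $t=1,\dots,T$, and telescoping the $f$-differences leaves $\eta\sum_{t=1}^T \Ebb[\|\nabla f(x_{t-1})\|] \leq f(x_0)-\Ebb[f(x_T)] + T\eta C_d\mu L + T\frac{\eta^2 L}{2}V$. I then bound $\Ebb[f(x_T)]\geq f^*$ using Assumption~\ref{asp:f}\textit{(3)}, divide by $\eta T$, and finally use that the minimum over $t$ is at most the average, $\Ebb[\min_t\|\nabla f(x_{t-1})\|]\leq \frac{1}{T}\sum_{t=1}^T\Ebb[\|\nabla f(x_{t-1})\|]$, to obtain~\eqref{eq:core_bound}.

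Since both lemmas do the heavy lifting, the only real care needed is the conditioning step: I must argue that the per-iteration randomness is independent of $x_{t-1}$ so that the fixed-$x$ guarantees of the lemmas transfer to the conditional expectations, and that iterating expectations via the tower property is legitimate before telescoping. This is where I expect the main, though modest, obstacle to lie; the remaining manipulations are the routine rearrangement and averaging sketched above.
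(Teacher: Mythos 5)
Your proposal is correct and follows essentially the same route as the paper's proof: apply the $L$-smoothness descent inequality to $x_t = x_{t-1} - \eta g_t$, plug in Lemma \ref{lemma:gradient} and Lemma \ref{lm:rank_grad_est_var}, rearrange, telescope over $T$ iterations, lower-bound $f(x_T)$ by $f^*$, and bound the minimum by the average. Your explicit attention to the conditioning/tower-property step is a point the paper glosses over, but it does not change the argument.
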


\begin{corollary}
    By taking $\eta =\sqrt{\frac{1}{dT}}$ and $\mu= \sqrt{\frac{d}{C_d^2T}}$ in Theorem \ref{thm:ZOtopk}, we have 
    \begin{align}
\Ebb\left[\min_{t\in\{1,...,T\}} \|\nabla f(x_{t-1})\|\right]=\Oc\left(\sqrt{\frac{d}{T}}\right).
    \end{align}
\end{corollary}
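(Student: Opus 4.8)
The plan is to establish the convergence bound in Theorem~\ref{thm:ZOtopk} by a standard descent-lemma argument, telescoping the per-iteration decrease of $f$ and then invoking the variance bound from Lemma~\ref{lm:rank_grad_est_var}. The key observation is that the gradient estimator $g_t$ used in Algorithm~\ref{alg:ZO-RankSGD} is precisely the rank-based estimator $\tilde g(x_{t-1})$ from \eqref{eq:rank_grad_est}, so all the machinery of Section~\ref{sec:grad_est} applies directly at each step.

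First I would fix an iteration $t$ and condition on $x_{t-1}$, treating the fresh i.i.d.\ samples $\{\xi_{(t,i)}\}_{i=1}^m$ as the only source of randomness. Using the $L$-smoothness of $f$ together with the update $x_t = x_{t-1}-\eta g_t$, the same computation that produced \eqref{eq:descent} gives
\begin{align}
\Ebb[f(x_t)\mid x_{t-1}] - f(x_{t-1}) \leq -\eta\langle \nabla f(x_{t-1}),\Ebb[\tilde g(x_{t-1})]\rangle + \frac{\eta^2 L}{2}\Ebb[\|\tilde g(x_{t-1})\|^2].
\end{align}
For the inner-product term I invoke Lemma~\ref{lemma:gradient} (valid since $\Ebb[\tilde g(x)]=\Ebb[\hat g(x)]$), bounding it below by $\|\nabla f(x_{t-1})\| - C_d\mu L$. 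For the second-moment term I substitute the bound from Lemma~\ref{lm:rank_grad_est_var}. This yields the one-step inequality
\begin{align}
\eta\|\nabla f(x_{t-1})\| \leq f(x_{t-1}) - \Ebb[f(x_t)\mid x_{t-1}] + \eta C_d\mu L + \frac{\eta^2 L}{2}\left(\frac{2d}{|\Ec|}+\frac{N(\Ec)}{|\Ec|^2}M_2(f,\mu)+M_1(f,\mu)\right).
\end{align}

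Next I would take total expectations, sum over $t=1,\ldots,T$, and telescope the $f(x_{t-1})-\Ebb[f(x_t)]$ terms to obtain $f(x_0)-\Ebb[f(x_T)]\leq f(x_0)-f^*$ using the lower bound $f^*$ from Assumption~\ref{asp:f}(3). Dividing through by $\eta T$ and using that $\min_t\|\nabla f(x_{t-1})\|\leq \frac{1}{T}\sum_t \|\nabla f(x_{t-1})\|$ (so the minimum is controlled by the average) delivers \eqref{eq:core_bound}. The corollary then follows by plugging in $\eta=\sqrt{1/(dT)}$ and $\mu=\sqrt{d/(C_d^2 T)}$: the first term becomes $\Oc(\sqrt{d/T})$, the $C_d\mu L$ term becomes $\Oc(\sqrt{d/T})$, and since the parenthesized variance factor is $\Oc(d)$ by Lemma~\ref{lm:bound4metrics}, the last term scales as $\eta L \cdot d = \Oc(\sqrt{d/T})$.

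The main obstacle I anticipate is not the telescoping itself but the correct handling of the expectation structure and the passage from the averaged gradient norm to the minimum. One must be careful that the second-moment and inner-product bounds hold uniformly in $x$ (which is why Lemmas~\ref{lemma:gradient} and~\ref{lm:rank_grad_est_var} are stated as ``for any $x$''), so that conditioning on $x_{t-1}$ and then taking outer expectations is legitimate. A subtle point worth checking is whether the left-hand side should carry $\Ebb[\min_t\|\nabla f(x_{t-1})\|]$ or $\min_t \Ebb[\|\nabla f(x_{t-1})\|]$; the stated bound uses the former, which is the weaker (hence correct) direction, and follows cleanly once the averaged bound is in hand via $\Ebb[\min_t\|\nabla f(x_{t-1})\|]\leq \frac{1}{T}\sum_t\Ebb[\|\nabla f(x_{t-1})\|]$.
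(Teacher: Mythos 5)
Your proposal is correct and follows essentially the same route as the paper: the descent-lemma step with Lemma \ref{lemma:gradient} and Lemma \ref{lm:rank_grad_est_var}, telescoping to get \eqref{eq:core_bound}, bounding the minimum by the average, and then substituting the stated $\eta$ and $\mu$ so that all three terms scale as $\Oc(\sqrt{d/T})$ (using that the variance factor is $\Oc(d)$ since $M_1,M_2\leq 2d$ and $N(\Ec)/|\Ec|^2$ is bounded). No gaps.
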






\textbf{Effect of $m$ and $k$ on the convergence speed of Algorithm \ref{alg:ZO-RankSGD}. } As we have discussed in Section \ref{sec:ranking}, $m$ and $k$ affect the convergence speed through the variance of the gradient estimator. Specifically, in the upper bound of \eqref{eq:core_bound}, we have $\frac{2d}{|\Ec|}  + \frac{N(\Ec)}{|\Ec|^2} M_2(f,\mu)=\Oc\left(\frac{d}{km}+\frac{d}{k}\right)$.

{
\textbf{How to choose $\mu$ in Algorithm \ref{alg:ZO-RankSGD}. } As we can see from \eqref{eq:core_bound}, a smaller $\mu$ generally leads to a tighter bound as the estimated gradient aligns better with the true gradient. However, practical implementation demands striking a balance, as an excessively small $\mu$ might result in perturbed instances that are extremely similar to humans, making the ranking decision of them challenging.
Therefore, a practical rule for choosing $\mu$ is: While we should try to minimize $\mu$, it should remain within the range of human discriminability. 
}

\subsection{Line search via ranking oracle} 
In this section, we discuss two potential issues that may arise when implementing Algorithm \ref{alg:ZO-RankSGD}. Firstly, it can be cumbersome to manually tune the step size $\eta$ required for each iteration. Secondly, it may be challenging for users to know whether the objective function is decreasing in each iteration as the function values are not accessible. In order to address these challenges, we propose a simple and effective line search method that leverages the $(l,1)$-ranking oracle to determine the optimal step size for each iteration. The method involves querying the oracle with a set of inputs $\{x_{t-1},x_{t-1}-\eta\gamma g_t,...,x_{t-1}-\eta\gamma^{l-1} g_t\}$, where $\gamma\in(0,1)$ represents a scaling factor that controls the rate of step size reduction. By monitoring whether or not $x_t$ is equal to $x_{t-1}$, users can observe the progress of Algorithm \ref{alg:ZO-RankSGD}, while simultaneously selecting a suitable step size to achieve the best results. It is worth noting that this line search technique is not unique to Algorithm \ref{alg:ZO-RankSGD} and can be applied to any gradient-based optimization algorithm, including those in \citep{nesterov2017random,cai2022one}. To reflect this, we present the proposed line search method as Algorithm \ref{alg:ZO-RankSGD-LS}, under the assumption that the gradient estimator $g_t$ has already been computed.


\begin{algorithm}[htbp]
    \small
        \begin{algorithmic}[1]
          \REQUIRE Initial point $x_0$,  stepsize $\eta$, number of iterations $T$, shrinking rate $\gamma\in(0,1)$, number of trials $l$.
        \FOR{$t=1$ to $T$}
    
        \STATE Compute the gradient estimator $g_t$.
        \STATE $x_t = {\arg\min}_{x\in \Xc_t}f(x)$, where $\Xc_t=\{x_{t-1},x_{t-1}-\eta\gamma g_t,...,x_{t-1}-\eta\gamma^{l-1} g_t\}$.
        \ENDFOR
        \end{algorithmic}
        \caption{Line search strategy for gradient-based optimization algorithms}
        \label{alg:ZO-RankSGD-LS}
        \end{algorithm}


    

\begin{figure}[htbp]
	\centering 
 
	\begin{subfigure}[b]{0.45\textwidth}
	\centering
	\includegraphics[width=\textwidth]{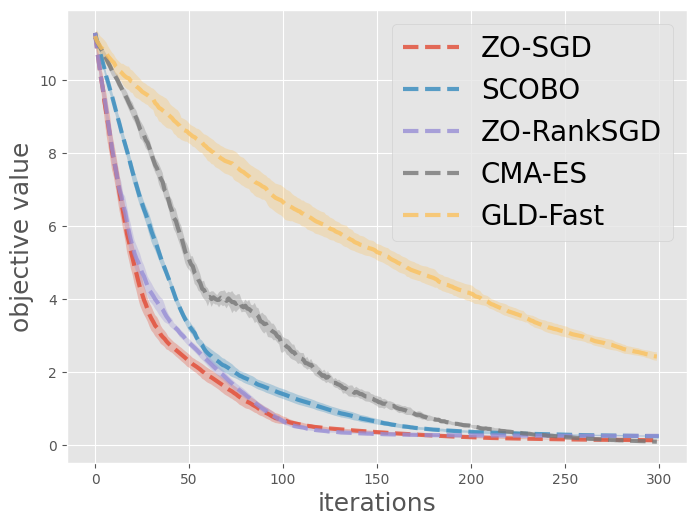}
 	\caption{Quadratic function}
	\label{}
\end{subfigure}
	\begin{subfigure}[b]{0.47\textwidth}
	\centering
	\includegraphics[width=\textwidth]{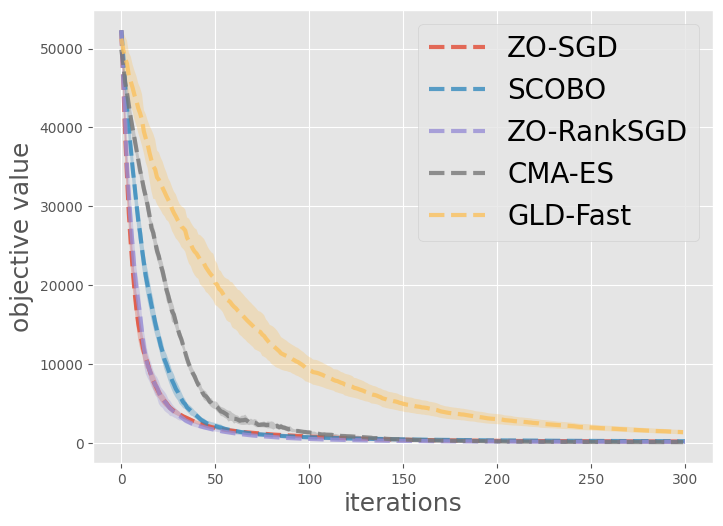}
	\caption{Rosenbrock function}
	\label{}
\end{subfigure} 
\caption{Performance of different algorithms.}
	\label{fig:exp1_1} 
\end{figure}

\section{Experiments}
\label{sec:exp}

\subsection{Simple functions}
\label{sec:sim}
In this section, we present experimental results demonstrating the effectiveness of Algorithm \ref{alg:ZO-RankSGD} on two simple functions: \textit{(1) }Quadratic function: $f(x) = \|x\|_2^2$, $x\in\Rbb^{100}$.
   \textit{(2) } Rosenbrock function: $f(x) = \sum_{i=1}^{99}\left((1-x_i)^2+100(x_{i+1}-x_i^2)^2\right)$, $x=[x_1,...,x_{100}]^\top\in\Rbb^{100}$. To demonstrate the effectiveness of our algorithm and verify our theoretical claims, we conduct two experiments, and all figures are obtained by averaging over 10 independent runs and are visualized in the form of mean$\pm$std.

\textbf{Comparing Algorithm \ref{alg:ZO-RankSGD} with existing algortihms. }
In this first experiment, we compare Algorithm \ref{alg:ZO-RankSGD} with the following algorithms in the existing literature:
 \textit{(1) }ZO-SGD \citep{nesterov2017random}: A zeroth-order optimization algorithm for valuing oracle.
 \textit{(2) }SCOBO \citep{cai2022one}: A zeroth-order algorithm for pairwise comparing oracle.
 \textit{(3) }GLD-Fast \citep{golovin2019gradientless}: A direct search algorithm for top-1 oracle, namely, $(m,1)$-ranking oracle.
 \textit{(4) }CMA-ES \citep{loshchilov2016cma,hansen2019pycma}: A  heuristic optimization algorithm for ranking oracle.

To ensure a meaningful comparison, we fix the number of queries $m = 15$ at each iteration for all algorithms. For gradient-based algorithms, ZO-SGD, SCOBO, and our ZO-RankSGD, we use query points for gradient estimation and 5 points for the line search. In this experiment, we set $m = k$ for ZO-RankSGD, i.e. it can receive the full ranking information. Moreover, we tune the hyperparameters such as stepsize, smoothing parameter, and line search parameter via grid search for each algorithm, and the details are provided in Appendix \ref{app:exp1_hyper}. {A high-dimensional experiment is also included in Appendix \ref{app:exp1_highdim}.} Our experiment results in Figure \ref{fig:exp1_1} on the two functions show that the gradient-based algorithm can outperform the direct search algorithm GLD-Fast and the heuristic algorithm CMA-ES. Besides, Algorithm \ref{alg:ZO-RankSGD} can outperform SCOBO because the ranking oracle contains more information than the pairwise comparison oracle. Additionally, Algorithm \ref{alg:ZO-RankSGD} behaves similarly to ZO-SGD, indicating that the ranking oracle can be almost as informative as the valuing oracle for zeroth-order optimization.



\begin{figure}[htbp]
	\centering 
	\begin{subfigure}[b]{0.45\textwidth}
	\centering
	\includegraphics[width=\textwidth]{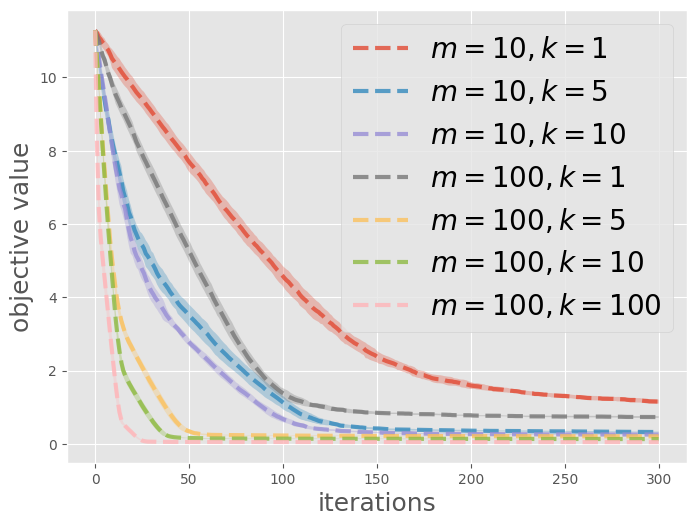}
 	\caption{Quadratic function}
	\label{}
\end{subfigure}
	\begin{subfigure}[b]{0.47\textwidth}
	\centering
	\includegraphics[width=\textwidth]{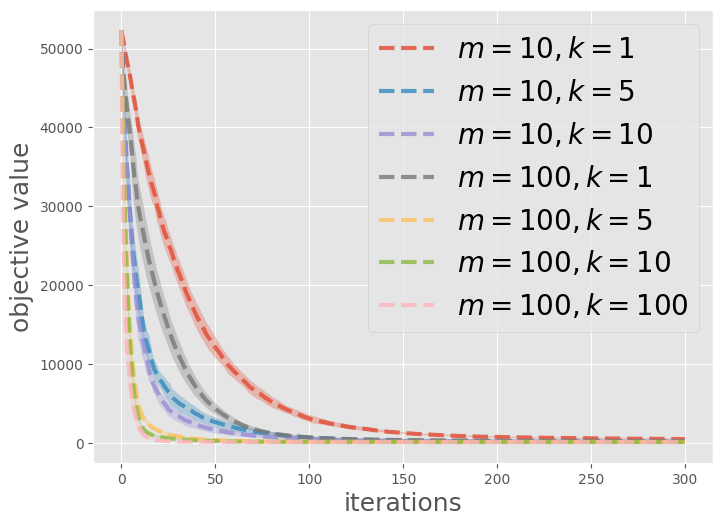}
	\caption{Rosenbrock function}
	\label{}
\end{subfigure}
\caption{Performance of ZO-RankSGD under different combinations of $m$ and $k$.}
	\label{fig:exp1_2} 
\end{figure}
\textbf{Investigating the impact of $m$ and $k$ on Algorithm \ref{alg:ZO-RankSGD}.}
In this part, we aim to validate the findings presented in Lemma \ref{lm:rank_grad_est_var} and Theorem \ref{thm:ZOtopk} by running Algorithm \ref{alg:ZO-RankSGD} with various values of $m$ and $k$. To keep the setup simple, we set the step size $\eta$ to 50 and the smoothing parameter $\mu$ to 0.01 for Algorithm \ref{alg:ZO-RankSGD} with line search (where $l=5$ and $\gamma=0.1$). Figure \ref{fig:exp1_2} illustrates the performance of ZO-RankSGD under different combinations of $m$ and $k$ on the two functions, which confirm our theoretical findings presented in Lemma \ref{lm:rank_grad_est_var}. For example, we observe that $(m=10,k=10)$ yields better performance than $(m=100,k=1)$, as predicted by the second variance term in \eqref{eq:rank_grad_est_var}, which dominates and scales as $\Oc({1}/{k})$.

{
\textbf{Noisy ranking oracles. } In practice, the ranking feedback we receive from human evaluators may have some mistakes or inaccuracies. In Appendix \ref{app:noisy}, we also present experimental results with noisy ranking oracles, namely, these oracles do not always give the correct ranking feedback. Remarkably, our proposed ZO-RankSGD algorithm shows robustness in handling this kind of noisy feedback, making it well-suited for real-world applications.
Looking forward, an interesting area for further exploration is the theoretical understanding of these noisy ranking oracles. This involves understanding how to formally represent the inaccuracy inherent in such oracles, similar to what \citep{cai2022one} did for comparison oracles.
}




\begin{figure}[htbp]
	\centering
	\begin{subfigure}[b]{0.32\textwidth}
	\centering
	\includegraphics[width=\textwidth]{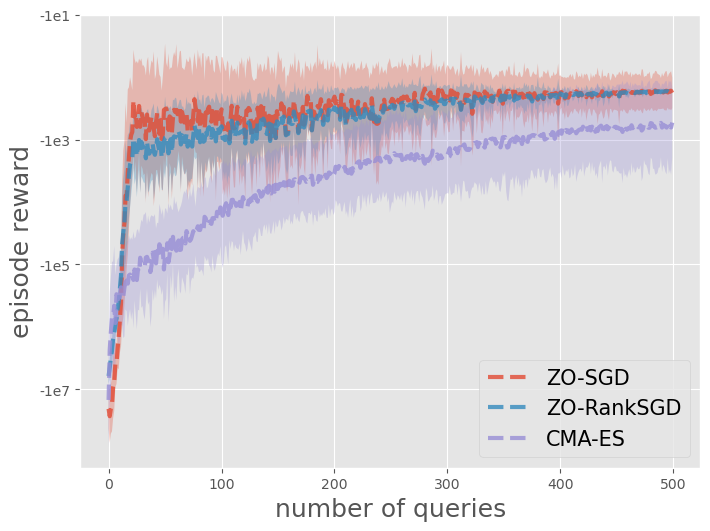}
 	\caption{Reacher-v2}
    \end{subfigure}
	\begin{subfigure}[b]{0.32\textwidth}
	\centering
	\includegraphics[width=\textwidth]{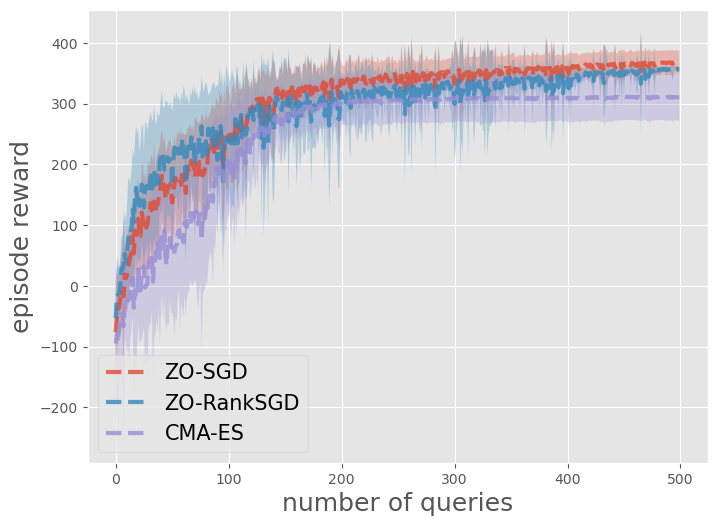}
	\caption{Swimmer-v2}
    \end{subfigure}
    \begin{subfigure}[b]{0.32\textwidth}
	\centering
	\includegraphics[width=\textwidth]{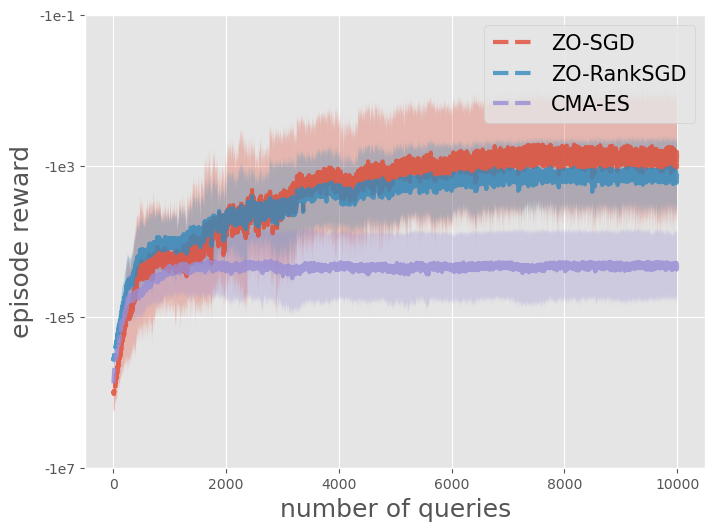}
	\caption{HalfCheetah-v2}
    \end{subfigure} 
\caption{Perfomance of ZO-RankSGD and CMA-ES on three MuJoCo environments}
	\label{fig:exp2_1} 
\end{figure}

\subsection{Reinforcement Learning with ranking oracles}
\label{sec:hyper} 

\textbf{Motivation. }In this section, we illustrate how ZO-RankSGD can be seamlessly employed for policy optimization in reinforcement learning, given only a ranking oracle of the episode reward.  Such a setting especially captures the scenario where human evaluators are asked to rank multiple episodes based on their expertise. Specifically, we adopt a similar experimental setup as \citep{cai2022one,duan2016benchmarking}, where the goal is to learn a policy for simulated robot control with several problems from the MuJoCo suite of benchmarks \citep{todorov2012mujoco}. We compare ZO-RankSGD to the CMA-ES algorithm, a commonly used optimization baseline in reinforcement learning \citep{bengs2021preference} that also solely relies on a ranking oracle. Both algorithms are restricted to query the episode reward via a $(5,5)$-ranking oracle. {To demonstrate the performance gap between ranking oracle and value oracle, we also include ZO-SGD for comparison. To make a fair comparison, ZO-SGD is designed to receive value feedback of 5 points for each query.} Additionally, we draw a comparison between ZO-RankSGD and SCOBO; however, given the disparate nature of their query oracles, the comparison is intricate. For a comprehensive discussion of this aspect and more experiment details, we refer the readers to Appendix \ref{app:rl_scobo}.


\textbf{Results. }The experiment results are shown in Figure \ref{fig:exp2_1}, where the x-axis is the number of queries to the ranking oracle, and the y-axis is the ground-truth episode reward. In these experiments, we  do not use line search for ZO-RankSGD, instead, we let $\eta=\mu$, and decay them exponentially after every rollout. As can be seen from Figure \ref{fig:exp2_1}, our algorithm can outperform CMA-ES by a significant margin on all three tasks, exhibiting a better ability to incorporate ranking information. {Additionally, ZO-RankSGD exhibits performance on par with ZO-SGD, reinforcing our findings from the experiment illustrated in Figure \ref{fig:exp1_1} and underscoring the effectiveness of the ranking oracle in providing substantial optimization-relevant information.}


\begin{figure}[H]
	\centering
	\begin{subfigure}[b]{0.23\textwidth}
	\centering
	\includegraphics[width=\textwidth]{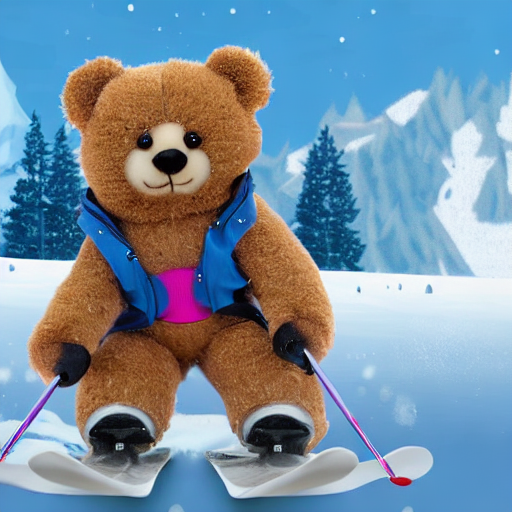}
 	\caption{Original}
	\label{}
\end{subfigure}
	\begin{subfigure}[b]{0.23\textwidth}
	\centering
	\includegraphics[width=\textwidth]{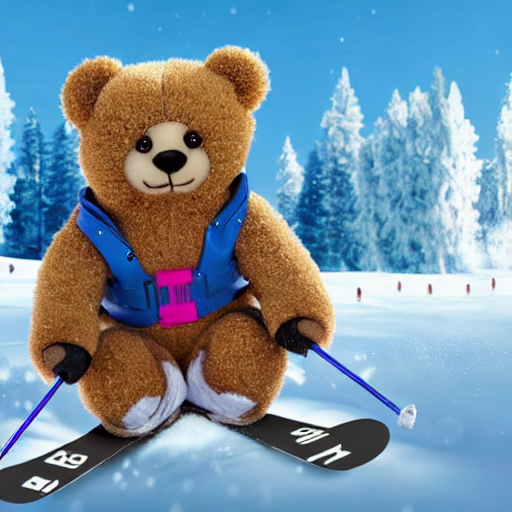}
	\caption{Perturbed 1}
	\label{}
\end{subfigure}
\begin{subfigure}[b]{0.23\textwidth}
	\centering
	\includegraphics[width=\textwidth]{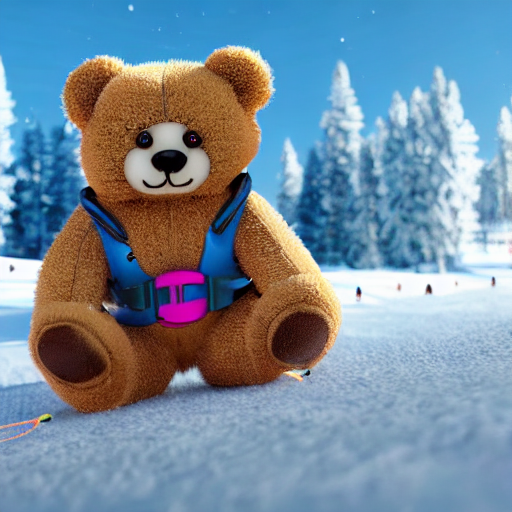}
	\caption{Perturbed 2}
	\label{}
\end{subfigure}
\begin{subfigure}[b]{0.23\textwidth}
	\centering
	\includegraphics[width=\textwidth]{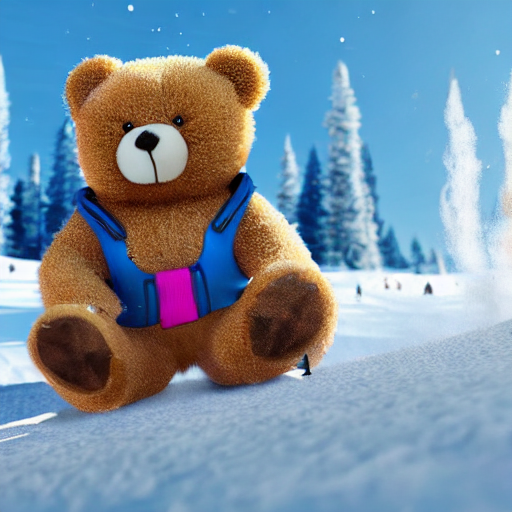}
	\caption{Perturbed 3}
	\label{}
\end{subfigure}  
\caption{\small Continuous property of reverse diffusion process. The used text prompt is \it{A teddy bear is skiing, detailed, realistic, 4K, 3D}.}
	\label{fig:SD_continous}  
\end{figure}

\subsection{Taming Diffusion Generative Model with Human Feedback}
\label{sec:stable} 

In recent years, there has been a growing interest in diffusion generative models, which have demonstrated remarkable performance in generating high-quality images \citep{ho2020denoising,song2020score,dhariwal2021diffusion}. Despite these advancements, these models often struggle with capturing intricate details, such as human fingers or key elements in prompts, and sometimes fail to align with user aesthetics. To address this issue, we draw inspiration from recent successes in aligning Language Models with human feedback \citep{ouyang2022training,liu2023languages,chatgpt,bai2022training}, and propose to utilize human ranking feedback to enhance the generated images. We noticed a concurrent work \citep{lee2023aligning} sharing a similar motivation with us. {Specifically, their method is based on the existing approach of RLHF and utilizes a considerable amount of pre-collected data for fine-tuning the diffusion model. Despite this shared motivation, our method tackles a distinct problem from theirs, as our proposed method is not designed for fine-tuning, but to help the model better adapt to the need of new users at inference time, and the human feedback is collected in an online fashion.} {In this experiment, we aim to demonstrate the ability of ZO-RankSGD to improve the model's output at inference time by optimizing the control variables of generation, such as the latent embedding, with the underlying model fixed. A detailed description is provided below.}

\textbf{Experimental Setting.} We focus on the task of text-to-image generation, using the state-of-the-art Stable Diffusion model \citep{rombach2022high} to generate images based on given text prompts. {Firstly, we observe that a common practice for generating high-quality images in the community of Stable Diffusion is to run the sampling process multiple times with different random seeds, and then pick the optimal one. Inspired by this, we choose to optimize the latent noise embedding, which is equivalent to random seed, using human ranking feedback through our proposed Algorithm \ref{alg:ZO-RankSGD}, with an aim to produce images that are more appealing to humans. This experimental setting offers several advantages, including: \textit{(1)} The latent embedding is a low-dimensional vector and thus requires only a few rounds of human feedback for optimization. \textit{(2)} It can also serve as a data-collecting step before fine-tuning the model.  It is also worth noting that any continuous parameter in the diffusion model can be optimized similarly using human feedback.}

\textbf{Reverse diffusion process as a continuous mapping.} Firstly, we remark that only ODE-based diffusion samplers, like DDIM \citep{song2020denoising} and DPM-solver \citep{lu2022dpm}, are used in this study, as now the reverse diffusion process will be deterministic and only depends on the latent embedding. 
We demonstrate that optimizing the latent embedding is a valid continuous optimization problem by showing that, with slight perturbations of the latent embedding, diffusion samplers can usually generate multiple similar images. An example of this phenomenon is in Figure \ref{fig:SD_continous}, where the first image is generated using a given latent embedding, while the next three images are generated by perturbing this embedding with noise drawn from $\Nc(0,0.1I_d)$.

        \textbf{Examples.} We illustrate several optimization results in Figure \ref{fig:sd_exp1}, where we ourselves provided the human ranking feedback during these experiments. These instances highlight the improvements in realism and detail that our proposed Algorithm \ref{alg:ZO-RankSGD} can bring about through the use of human ranking feedback. To illustrate, in the first example, the image optimized with human guidance portrays human fingers and eyes with enhanced accuracy. In the second example, the optimized image adheres more closely to the prompt instruction, successfully capturing the intended item – orange juice. In the third example, the optimized image delivers a more visually appealing depiction of muscularity. Taken together, these results demonstrate the potential of our approach in refining the quality of generated images using human feedback. For more examples like the ones in Figure \ref{fig:sd_exp1}, and the details of the entire optimization process, we refer the readers to Appendix \ref{app:sd_exp}.

         \textbf{Human feedback vs. CLIP similarity score. } To underscore the unique advantage of human feedback, we hold the ZO-RankSGD algorithm constant, and contrast images that were optimized with human preference against those optimized using the CLIP similarity score \citep{radford2021learning}. CLIP, a cutting-edge model that contrasts language with images, calculates the similarity between given texts and images. However, when comparing the third and fourth columns in Figure \ref{fig:sd_exp1}, it is clear that since CLIP is trained on noisy text-image pairs from the internet, the images optimized using its similarity score can sometimes fall short of the original ones. Moreover, these CLIP-optimized images may not always resonate with human evaluators, further emphasizing the unique value of human feedback in refining image generation.

\section{Conclusion}
\label{sec:end} 

In this paper, we have rigorously studied a novel optimization problem where only ranking oracles of the objective function are available. For this problem, we have proposed the first provable zeroth-order optimization algorithm, ZO-RankSGD, which has consistently demonstrated its efficacy across simulated and real-world applications. We also  have presented how different ranking oracles can impact optimization performance, providing guidance on designing the user interface for ranking feedback.   Our algorithm has been shown to be a practical and effective way to incorporate human feedback, for example, it can be used to improve the detail of images generated by Stable Diffusion with human guidance. {Possible future directions to this work may include extending the theoretical results to incorporate noisy and uncertain ranking feedback, combining ZO-RankSGD with a model-based approach like Bayesian Optimization \citep{frazier2018tutorial}, or the techniques from active learning \citep{monarch2021human}, to further improve the query efficiency, and also applying it to other scenarios beyond human feedback. Besides, another important point is to investigate what is the optimal choice of $m$ and $k$ if we jointly consider the cognitive burden of humans and the query complexity via real social experiments.}

\begin{figure}[htbp]
    \centering
    \begin{subfigure}[b]{0.95\textwidth}
	\centering
	\includegraphics[width=\textwidth]{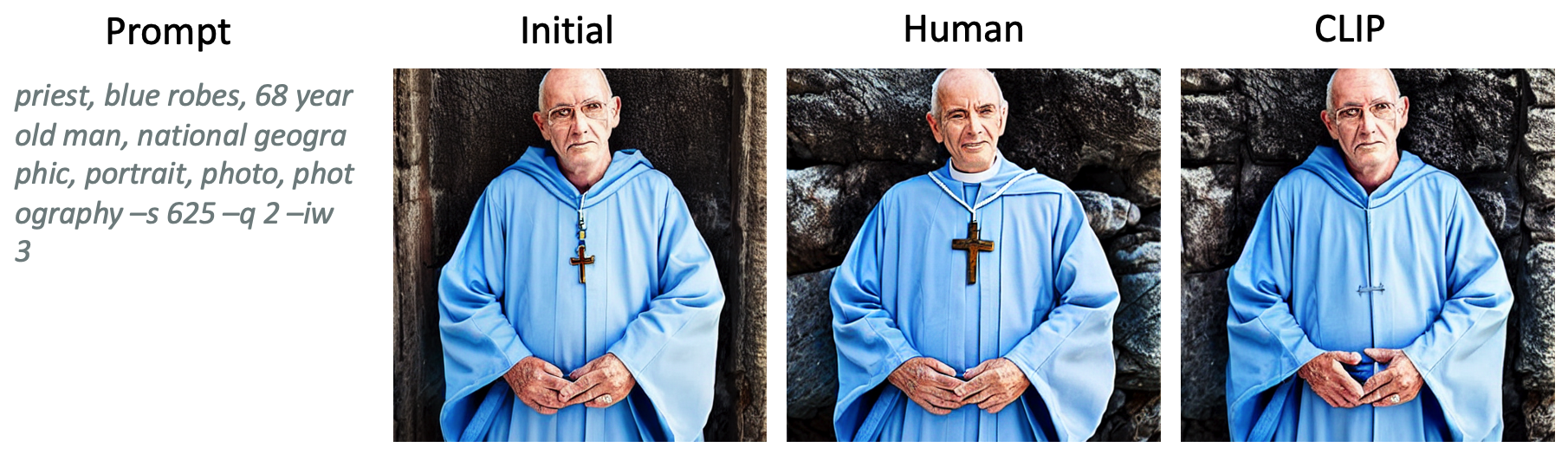}
\end{subfigure}
\begin{subfigure}[b]{0.95\textwidth}
	\centering 
	\includegraphics[width=\textwidth]{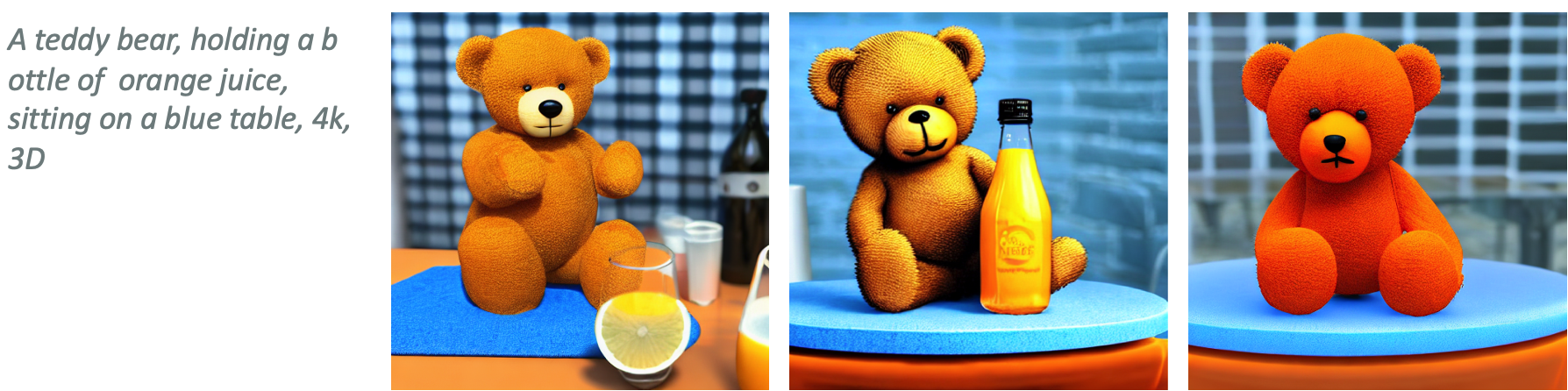}
\end{subfigure}
\begin{subfigure}[b]{0.95\textwidth}
	\centering 
	\includegraphics[width=\textwidth]{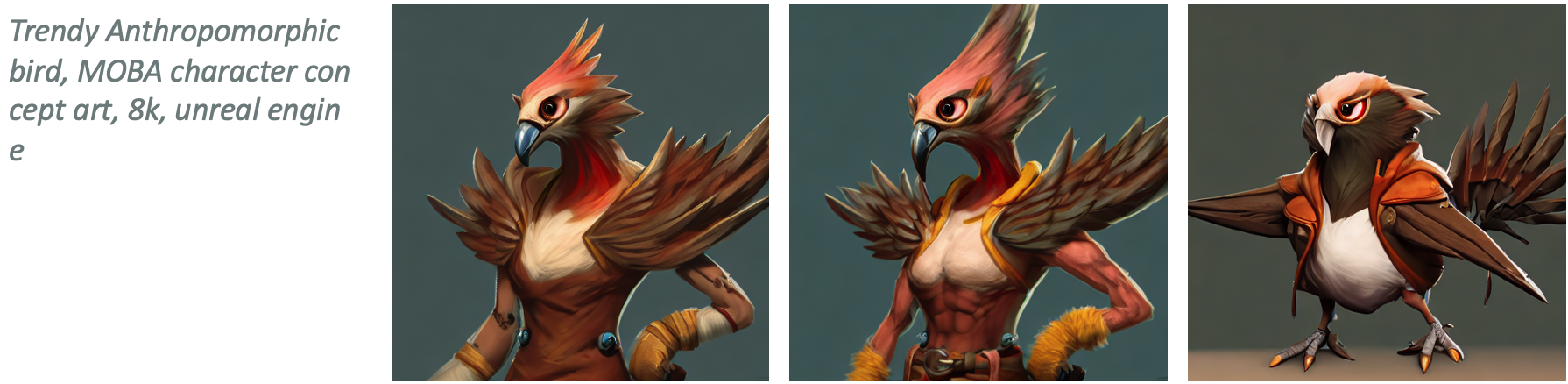}
\end{subfigure}
    \caption{\small Examples of optimizing latent embedding in diffusion generative model. Initial: The initial images selected through multiple randomly generated latent embeddings serve as the initial points for the later optimization process. Human: The images obtained by optimizing human preference. CLIP: The images obtained by optimizing the CLIP similarity score.}
    \label{fig:sd_exp1} 
\end{figure}

\section*{Acknowledgments}
The work is supported by Shenzhen Science and Technology Program under Grant No. RCJC20210609104448114, the NSFC, China, under Grant 62071409, and by Guangdong Provincial Key Laboratory of Big Data Computing.

\bibliography{iclr2024_conference}
\bibliographystyle{iclr2024_conference}
\nocite{*}

\appendix
\section*{Appendix}
\renewcommand{\thesubsection}{\Alph{subsection}}

\subsection{A simplified expression for \eqref{eq:rank_grad_est}}
\label{app:rank_grad_est}
Let $\Gc=(\Nc,\Ec)$ be the DAG constructed from the ranking information of $O_f^{(m,k)}$, we denote the input degrees and output degrees of  $x_i\in\Nc$ as $\deg_{\text{in}}(i)$ and $\deg_{\text{out}}(i)$ respectively.  We first notice that 
\begin{align}
 \sum_{(i,j)\in\Ec}(\xi_j-\xi_i) = \sum_{i=1}^m \left(\deg_{\text{in}}(i)-\deg_{\text{out}}(i)\right)\xi_i.
\end{align}
Denote $w_i=\deg_{\text{in}}(i)-\deg_{\text{out}}(i)$, if $O_f^{(m,k)}(x_1,...,x_m)= (i_1,...,i_k)$, then we can compute that 
\begin{align}
    w_{i_j}&=\deg_{\text{in}}(i_j)-\deg_{\text{out}}(i_j)=j-1-(m-j)=2j-m-1, \quad j=1,...,k.\\
    w_q& = \deg_{\text{in}}(q)-\deg_{\text{out}}(q)=k-0=k, \quad q\notin \{i_1,...,i_k\}.
\end{align}

\subsection{Missing Proof}

\begin{proof}[Proof of Lemma \ref{lemma:gradient}] In the following proof, we denote $p(\cdot)$ as the pdf function of $\Nc(0,I_d)$ for arbitrary dimension $d$.

   We first rewrite $\langle \nabla f(x),\hat g(x)\rangle$ as follows:
    \begin{align}\label{eq:inner_product}
        \langle \nabla f(x),\hat g(x)\rangle &= \left\langle \nabla f(x),  S_f(x,\xi_1,\xi_2,\mu)(\xi_1-\xi_2)\right\rangle
        =S_f(x,\xi_1,\xi_2,\mu) \cdot  \left\langle \nabla f(x), \xi_1-\xi_2\right\rangle.
    \end{align}

    By the second-order Taylor expansion with Cauchy remainders, we notice that
    \begin{align}
        \label{eq:ZOtopk1}
        f(x+\mu\xi_1) &= f(x) + \mu \langle \nabla f(x), \xi_1 \rangle + \frac{\mu^2}{2}\xi_1^\top \nabla^2 f(x_1)  \xi_1 ,\\
        \label{eq:ZOtopk2}
        f(x+\mu\xi_2) &= f(x) + \mu \langle \nabla f(x), \xi_2 \rangle + \frac{\mu^2}{2}\xi_2^\top \nabla^2 f(x_2)  \xi_2,
    \end{align} where $x_1$ and $x_2$ are two points around $x$.

    With \eqref{eq:ZOtopk1} and \eqref{eq:ZOtopk2} we can write $S_f(x,\xi_1,\xi_2,\mu)$ as follows: 
    \begin{align}
        \label{eq:sign_expand}
        S_f(x,\xi_1,\xi_2,\mu)= \sign{\langle \nabla f(x), \xi_1- \xi_2 \rangle+  \frac{\mu}{2}\xi_1^\top \nabla^2 f(x_1)  \xi_1- \frac{\mu}{2}\xi_2^\top \nabla^2 f(x_2)  \xi_2}.
    \end{align}

    Now we start to bound the term 
    \begin{align}
        \label{eq:key_exp}
        \Ebb\left[S_f(x,\xi_1,\xi_2,\mu) \cdot  \left\langle \nabla f(x), \xi_1-\xi_2\right\rangle\right],
    \end{align} where the expectation is taken over the random direction $\xi_{1}$ and $\xi_{2}$.

    Before doing that, we first define two important regions:
    \begin{align}
        \Rc_1 &= \{\left(\xi_1, \xi_2\right)\mid \left\langle \nabla f(x), \xi_1 - \xi_2 \right\rangle>0\},\\
        \Rc_{11} &= \{\left(\xi_1, \xi_2\right)\mid \left(\xi_1, \xi_2\right)\in \Rc_1,  S_f(x,\xi_1,\xi_2,\mu)\neq \sign{\left\langle \nabla f(x), \xi_1 - \xi_2 \right\rangle}\}.
    \end{align}

    Notice that when $\left(\xi_1, \xi_2\right)\in \Rc_1$, $S_f(x,\xi_1,\xi_2,\mu)\neq \sign{\left\langle \nabla f(x), \xi_1 - \xi_2 \right\rangle}$ is equivalent to 
    $$\langle \nabla f(x), \xi_1- \xi_2  \rangle+  \frac{\mu}{2}\xi_1^\top \nabla^2 f(x_1)  \xi_1- \frac{\mu}{2}\xi_2^\top \nabla^2 f(x_2)  \xi_2<0.$$

    Also, from $L$-smoothness, we can know that
    \begin{align*}
        -\frac{\mu L}{2}\left(\|\xi_1\|_2^2  + \|\xi_2\|_2^2  \right)\leq  \frac{\mu}{2}\xi_1^\top \nabla^2 f(x_1)  \xi_1- \frac{\mu}{2}\xi_2^\top \nabla^2 f(x_2)  \xi_2 .
    \end{align*}

    We denote the region 
    \begin{align}
        \bar \Rc_{11} &= \{\left(\xi_1, \xi_2\right)\mid \left(\xi_1, \xi_2\right)\in \Rc_1,  \langle \nabla f(x), \xi_1- \xi_2  \rangle-\frac{\mu L}{2}\left(\|\xi_1\|_2^2  + \|\xi_2\|_2^2  \right)<0\}.
    \end{align}

    It is easy to verify that $ \Rc_{11}\subseteq\bar \Rc_{11}$.  Let $\Rc_{12} = \Rc_1/\bar \Rc_{11}$, we can have the following inequality. 

    \begin{align}
        &\int_{\Rc_1}S_f(x,\xi_1,\xi_2,\mu)\left\langle \nabla f(x), \xi_1 - \xi_2 \right\rangle  p(\xi_1)p(\xi_2) d\xi_1d\xi_2\\
        =&\int_{\Rc_1/\Rc_{11}}S_f(x,\xi_1,\xi_2,\mu)\left\langle \nabla f(x), \xi_1 - \xi_2 \right\rangle  p(\xi_1)p(\xi_2) d\xi_1d\xi_2\notag\\
        &+\int_{\Rc_{11}}S_f(x,\xi_1,\xi_2,\mu)\left\langle \nabla f(x), \xi_1 - \xi_2 \right\rangle  p(\xi_1)p(\xi_2) d\xi_1d\xi_2\\
        =&\int_{\Rc_{1}/\Rc_{11}}\left\langle \nabla f(x), \xi_1 - \xi_2 \right\rangle p(\xi_1)p(\xi_2) d\xi_1d\xi_2\notag\\
        &-\int_{\Rc_{11}}\left\langle \nabla f(x), \xi_1 - \xi_2 \right\rangle p(\xi_1)p(\xi_2) d\xi_1d\xi_2\\
        \geq & \int_{\Rc_{1}/\bar \Rc_{11}}\left\langle \nabla f(x), \xi_1 - \xi_2 \right\rangle p(\xi_1)p(\xi_2) d\xi_1d\xi_2\notag\\ 
        &-\int_{\bar \Rc_{11}}\left\langle \nabla f(x), \xi_1 - \xi_2 \right\rangle p(\xi_1)p(\xi_2) d\xi_1d\xi_2\\
        =& 2\int_{\Rc_{12}}\left\langle \nabla f(x), \xi_1 - \xi_2 \right\rangle p(\xi_1)p(\xi_2) d\xi_1d\xi_2\notag\\ \label{eq:imp_lower}
        &-\int_{\Rc_{1}}\left\langle \nabla f(x), \xi_1 - \xi_2 \right\rangle p(\xi_1)p(\xi_2) d\xi_1d\xi_2.
    \end{align}

    Before we proceed to study the property of the integral in \eqref{eq:imp_lower}, let us first define an important function. Consider the function $h(v,r,d):\Rbb\times\Rbb_+\times \Zbb_+\rightarrow\Rbb$ defined  as follows:
    \begin{align}
        \label{eq:h_def}
        h(v,r,d) \stackrel{\text{def.}}=\sqrt{2} v\int_0^{\frac{2\sqrt{2}v}{r}}x F_{2d-1}\left(\left(\frac{2\sqrt{2}v}{r}-x\right)x\right)p(x)dx,
    \end{align} where $F_{2d-1}(\cdot)$ is the CDF of the $\chi^2$ distribution with $2d-1$ degrees of freedom. With this function, we can have the following lemma that presents the close form of the integrals in \eqref{eq:imp_lower}.

    \begin{lemma}
        \label{lm:int_h}
     \begin{align}
        \label{eq:target_int_1}
        &\int_{ \Rc_{1}}\left\langle \nabla f(x), \xi_1 - \xi_2 \right\rangle p(\xi_1)p(\xi_2) d\xi_1d\xi_2 = \frac{1}{\sqrt \pi}\|\nabla f(x)\|,\\ \label{eq:target_int_2}
        &\int_{ \Rc_{12}}\left\langle \nabla f(x), \xi_1 - \xi_2 \right\rangle p(\xi_1)p(\xi_2) d\xi_1d\xi_2 = h(\left\|\nabla f(x)\right\|, \mu L, d).
     \end{align}

    \end{lemma}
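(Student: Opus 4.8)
The plan is to treat both identities as explicit Gaussian integral computations, exploiting the rotational invariance of the standard Gaussian to collapse them to low-dimensional integrals. Write $v = \|\nabla f(x)\|$. Since $p(\xi_1)p(\xi_2)$ is invariant under any orthogonal transformation applied simultaneously to $\xi_1$ and $\xi_2$, and both the integrand and the regions $\Rc_1,\Rc_{12}$ depend on $(\xi_1,\xi_2)$ only through $\langle \nabla f(x),\xi_1-\xi_2\rangle$ and $\|\xi_1\|^2+\|\xi_2\|^2$, I may assume without loss of generality that $\nabla f(x) = v e_1$, so that $\langle \nabla f(x),\xi_1-\xi_2\rangle = v(\xi_{1,1}-\xi_{2,1})$, where $\xi_{1,1},\xi_{2,1}$ denote the first coordinates of $\xi_1,\xi_2$.

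For \eqref{eq:target_int_1}, the region $\Rc_1$ is simply $\{\xi_{1,1} > \xi_{2,1}\}$, and the integral becomes $v\,\Ebb[a\,\mathbf{1}\{a>0\}]$ with $a := \xi_{1,1}-\xi_{2,1}\sim\Nc(0,2)$. A one-line computation of the half-Gaussian first moment gives $\Ebb[a\,\mathbf{1}\{a>0\}] = \sqrt{2}/\sqrt{2\pi} = 1/\sqrt\pi$, which yields the claimed value $\tfrac{1}{\sqrt\pi}\|\nabla f(x)\|$.

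For \eqref{eq:target_int_2}, I first rewrite the region: by definition $\Rc_{12}=\Rc_1\setminus\bar \Rc_{11} = \{\langle\nabla f(x),\xi_1-\xi_2\rangle \ge \tfrac{\mu L}{2}(\|\xi_1\|^2+\|\xi_2\|^2)\}$, since this single quadratic constraint already forces the inner product to be nonnegative and thus absorbs the condition defining $\Rc_1$. Next I apply the orthogonal change of variables $u=(\xi_1-\xi_2)/\sqrt2$, $w=(\xi_1+\xi_2)/\sqrt2$, under which $u,w$ are independent $\Nc(0,I_d)$, while $\|\xi_1\|^2+\|\xi_2\|^2 = \|u\|^2+\|w\|^2$ and $\langle\nabla f(x),\xi_1-\xi_2\rangle = \sqrt2\, v\, u_1$. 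Splitting $\|u\|^2 = u_1^2 + \|u_{-1}\|^2$, the quantity $Q := \|u_{-1}\|^2 + \|w\|^2$ is a sum of $2d-1$ independent squared standard Gaussians, hence $Q\sim\chi^2_{2d-1}$ and is independent of $u_1$. The integral then reads $\sqrt2\,v\,\Ebb_{u_1}\!\big[u_1\,\Prob{Q \le u_1(\tfrac{2\sqrt2 v}{\mu L}-u_1)}\big]$; replacing the inner probability by $F_{2d-1}$, which vanishes whenever its argument is negative (i.e.\ outside $0\le u_1\le \tfrac{2\sqrt2 v}{\mu L}$), reproduces exactly $h(v,\mu L,d)$ as defined in \eqref{eq:h_def}.

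The routine parts are the two Gaussian moment computations; the \textbf{main obstacle} is the bookkeeping in the second identity—verifying that $\Rc_{12}$ collapses to the single quadratic inequality, that the map $(\xi_1,\xi_2)\mapsto(u,w)$ is measure-preserving and decouples the variables, and that after conditioning on $u_1$ the residual $2d-1$ degrees of freedom assemble into a single $\chi^2_{2d-1}$ variable whose CDF produces the $F_{2d-1}$ factor, with the nonnegativity of the threshold fixing the integration limits. The measure-zero boundaries where the inner product equals the quadratic term contribute nothing and can be ignored throughout.
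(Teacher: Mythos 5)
Your proposal is correct and follows essentially the same route as the paper's proof: the first identity is reduced to a half-Gaussian first moment, and the second is handled by rewriting $\Rc_{12}$ as a single quadratic (ball-type) constraint in $\Rbb^{2d}$, using Gaussian rotational invariance to isolate the coordinate along $\nabla f(x)$, and collapsing the remaining $2d-1$ degrees of freedom into a $\chi^2_{2d-1}$ CDF, which yields exactly $h(\|\nabla f(x)\|,\mu L,d)$. Your explicit $(u,w)=((\xi_1-\xi_2)/\sqrt2,(\xi_1+\xi_2)/\sqrt2)$ substitution is just a concrete instance of the paper's rotation of the stacked vector $\zeta\in\Rbb^{2d}$, so there is no substantive difference.
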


    Also, we need an important lemma on $h(v,r,d)$.

    \begin{lemma}
        \label{lm:inp_h} For any $d\in\Zbb_+$, there exist a  constant $C_d>0$ such that for any $v\geq 0$, $r> 0$,
     \begin{align}
        \label{eq:inp_h}
        h(v,r,d) \geq \left(\frac{1}{2\sqrt\pi} +\frac{1}{4} \right)v - \frac{1}{4} C_dr.
     \end{align}
    \end{lemma}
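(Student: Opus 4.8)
The plan is to collapse the two parameters $v,r$ into a single variable, reduce the claim to a monotonicity statement about a one-dimensional integral, and then exploit a strict numerical gap between the target constant and the limiting value of that integral. First I would dispose of the degenerate case $v=0$: there $h(0,r,d)=0$ while the right-hand side of \eqref{eq:inp_h} equals $-\tfrac14 C_d r\le 0$, so the bound is immediate. For $v>0$ introduce the single variable $a\stackrel{\text{def.}}=\tfrac{2\sqrt2\,v}{r}$, which is precisely the upper limit of integration in \eqref{eq:h_def}, and set
\[
G(a)\stackrel{\text{def.}}=\sqrt2\int_0^a x\,F_{2d-1}\!\big((a-x)x\big)\,p(x)\,dx ,
\]
so that $h(v,r,d)=v\,G(a)$. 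Since $r=2\sqrt2\,v/a$, dividing \eqref{eq:inp_h} by $v>0$ shows it is equivalent to the one-dimensional inequality
\[
G(a)\ \ge\ \Big(\tfrac{1}{2\sqrt\pi}+\tfrac14\Big)-\frac{C_d\sqrt2}{2a}.
\]

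Next I would record two elementary properties of $G$. For monotonicity, differentiate under the integral sign: the boundary term at $x=a$ vanishes because $F_{2d-1}(0)=0$, leaving
\[
G'(a)=\sqrt2\int_0^a x^2\, f_{2d-1}\!\big((a-x)x\big)\,p(x)\,dx\ \ge\ 0 ,
\]
where $f_{2d-1}\ge 0$ is the $\chi^2_{2d-1}$ density, so $G$ is non-decreasing. For the limit, note that for each fixed $x>0$ we have $(a-x)x\to\infty$, hence $F_{2d-1}((a-x)x)\to 1$; since the integrand is pointwise non-decreasing in $a$, monotone convergence gives $\lim_{a\to\infty}G(a)=\sqrt2\int_0^\infty x\,p(x)\,dx=\tfrac1{\sqrt\pi}$.

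The crucial observation is that the target constant lies strictly below this limit, since $\tfrac1{\sqrt\pi}-\big(\tfrac1{2\sqrt\pi}+\tfrac14\big)=\tfrac1{2\sqrt\pi}-\tfrac14>0$ (equivalently $\pi<4$). I would then split on the size of $a$. As $G$ is non-decreasing with supremum $\tfrac1{\sqrt\pi}>\tfrac1{2\sqrt\pi}+\tfrac14$, there is a threshold $a_0=a_0(d)$ (depending on $d$ through $F_{2d-1}$) with $G(a)\ge\tfrac1{2\sqrt\pi}+\tfrac14$ for all $a\ge a_0$; for such $a$ the desired inequality holds after simply discarding the nonnegative term $\tfrac{C_d\sqrt2}{2a}$. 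For $a<a_0$ I would use only $G(a)\ge 0$ (the integrand is nonnegative) and pick $C_d$ large enough that the right-hand side is already nonpositive there; taking any $C_d\ge\sqrt2\,a_0\big(\tfrac1{2\sqrt\pi}+\tfrac14\big)$ guarantees $\tfrac{C_d\sqrt2}{2a}\ge\tfrac1{2\sqrt\pi}+\tfrac14$ on $(0,a_0)$. Combining the two regimes proves the lemma.

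I expect the main obstacle to be purely in cleanly verifying the two structural facts about $G$ — the monotonicity via the Leibniz rule (ensuring the boundary contribution is killed by $F_{2d-1}(0)=0$) and the identification of the limit $\tfrac1{\sqrt\pi}$ by monotone convergence — rather than in any hard estimate. The genuinely delicate point is the strict gap $\tfrac1{2\sqrt\pi}>\tfrac14$: it is exactly what allows the case split to close, and it explains why the additive constant $\tfrac14$ and the factor $\tfrac12$ multiplying $\tfrac1{\sqrt\pi}$ appear in the statement, with the $d$-dependence of $C_d$ entering only through the threshold $a_0(d)$.
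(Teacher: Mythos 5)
Your proof is correct and follows essentially the same route as the paper's: both reduce to the single variable $v/r$ (your $a$ is just $2\sqrt{2}\,v/r$), identify the limit $1/\sqrt{\pi}$ of the rescaled integral, exploit the strict gap $\tfrac{1}{2\sqrt{\pi}}>\tfrac{1}{4}$ (i.e.\ $\pi<4$) to extract a threshold, and close with the same two-case split according to whether $\left(\tfrac{1}{2\sqrt{\pi}}+\tfrac14\right)v$ exceeds $\tfrac14 C_d r$. The only difference is cosmetic: you obtain the threshold from monotonicity of $G$ (via the Leibniz rule) together with monotone convergence, whereas the paper reaches the same limit by a direct $\epsilon$--$N$ estimate that does not require monotonicity.
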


     Combining \eqref{eq:imp_lower}, \eqref{eq:target_int_1}, \eqref{eq:target_int_2} and \eqref{eq:inp_h}, we have

     \begin{align}
        \int_{\Rc_1}S_f(x,\xi_1,\xi_2,\mu)\left\langle \nabla f(x), \xi_1 - \xi_2 \right\rangle  p(\xi_1)p(\xi_2) d\xi_1d\xi_2\geq  \frac{1}{2}\|\nabla f(x)\| - \frac{1}{2}C_d\mu L.
     \end{align}

     Similarly, if we define 
     $$\Rc_2 = \{\left(\xi_1, \xi_2\right)\mid \left\langle \nabla f(x), \xi_1 - \xi_2 \right\rangle<0\},$$

     we have 

     \begin{align}
        &\int_{\Rc_2}S_f(x,\xi_1,\xi_2,\mu)\left\langle \nabla f(x), \xi_1 - \xi_2 \right\rangle p(\xi_1)p(\xi_2) d\xi_1d\xi_2 \\
        =&\int_{\Rc_2}S_f(x,\xi_2,\xi_1,\mu)\left\langle \nabla f(x), \xi_2 - \xi_1 \right\rangle p(\xi_1)p(\xi_2) d\xi_1d\xi_2\\
        =&\int_{\Rc_1}S_f(x,\xi_1,\xi_2,\mu)\left\langle \nabla f(x), \xi_1 - \xi_2 \right\rangle  p(\xi_1)p(\xi_2) d\xi_1d\xi_2, 
     \end{align}becasue the integral on $\Rc_1$ is symmetric to the integral on $\Rc_2$ by swapping $\xi_1$ and $\xi_2$. Since $\Rbb^{2d}/ (\Rc_1\cup \Rc_2)$ has zero measure, we have

     \begin{align}
        & \Ebb\left[S_f(x,\xi_1,\xi_2,\mu) \cdot  \left\langle \nabla f(x), \xi_1-\xi_2\right\rangle\right]\notag\\ 
        =& 2 \int_{ \Rc_{1}}\left\langle \nabla f(x), \xi_2 - \xi_1 \right\rangle p(\xi_1)p(\xi_2) d\xi_1d\xi_2\\ \label{eq:exp_lower}
        \geq & \|\nabla f(x)\| - C_d\mu L.
     \end{align}
\end{proof}

\begin{proof}[Proof of Lemma \ref{lm:metric_st_close_form}]

Suppose that $O_f^{(m,k)}(x_1,...,x_m)= (i_1,...,i_k)$, we seperate $\Nc$ into two node set: $$\Nc_1 = \{{i_1},...,{i_k}\}\text{ and }\Nc_2 = \{{q}\in\{1,...,m\}\mid q\notin \{i_1,...,i_k\}\}.$$

Firstly, since the subgraph of $\Gc$ on $\Nc_1$ is a complete graph, the number of edges in this subgraph is $k(k-1)/2$. The remaining edges in $\Gc$ connect the node in $\Nc_2$ to the node in $\Nc_1$, hence the number of them is $k(m-k)$. Therefore,
\begin{align}
    |\Ec| =k(k-1)/2+k(m-k)= km-(k^2+k)/2.
\end{align}





Now we denote the set of neighbooring edge pairs as $\Sc=\{\left((i,j),({i'},{j})\right)\in \bar\Ec\times \bar\Ec|i\neq i'\}$. We can split $\Sc$ as the following five set:
\begin{align}
    &S_1 = \{\left((i,j),({i'},{j})\right)\in \bar\Ec\times \bar\Ec|i\neq i',i\in\Nc_1,i'\in\Nc_1,j\in\Nc_1\},\\
    &S_2 = \{\left((i,j),({i'},{j})\right)\in \bar\Ec\times \bar\Ec|i\neq i',i\in\Nc_1,i'\in\Nc_1,j\in\Nc_2\},\\
    &S_3 = \{\left((i,j),({i'},{j})\right)\in \bar\Ec\times \bar\Ec|i\neq i',i\in\Nc_1,i'\in\Nc_2,j\in\Nc_1\},\\
    &S_4 = \{\left((i,j),({i'},{j})\right)\in \bar\Ec\times \bar\Ec|i\neq i',i\in\Nc_2,i'\in\Nc_1,j\in\Nc_1\},\\
    &S_5 = \{\left((i,j),({i'},{j})\right)\in \bar\Ec\times \bar\Ec|i\neq i',i\in\Nc_2,i'\in\Nc_2,j\in\Nc_1\}.
\end{align}

For the first set $\Sc_1$, we can compute that 
\begin{align}
    |\Sc_1| = 6\binom{k}{3}=k(k-1)(k-2),
\end{align} because every edge pair composes of three nodes, and every three nodes can form $6$ edge pairs.

For the second set $\Sc_2$, we have 
\begin{align}
    |\Sc_2| = 2(m-k)\binom{k}{2} = (m-k)k(k-1),
\end{align}because $|\Nc_2|=m-k$ and $|\{(i,i')\in\Nc_1\times\Nc_1\mid i\neq i'\}|=2\binom{k}{2}.$

Similarly, for the set $\Sc_3$ and $\Sc_4$, we can obtain
\begin{align}
    |\Sc_3|=|\Sc_4| = 2(m-k)\binom{k}{2} = (m-k)k(k-1).
\end{align}

Finally, for the set $\Sc_5$, we can compute that
\begin{align}
    |\Sc_5|=2k\binom{m-k}{2}=k(m-k)(m-k-1),
\end{align}because $|\Nc_1|=k$ and $|\{(i,i')\in\Nc_2\times\Nc_2\mid i\neq i'\}|=2\binom{m-k}{2}.$

In all, we have 
\begin{align}
    |\Sc| &= |\Sc_1|+|\Sc_2|+|\Sc_3|+|\Sc_4|+|\Sc_5|\\ 
    &= k(k-1)(k-2)+3(m-k)k(k-1)+k(m-k)(m-k-1)\\
    &=m^2k+mk^2-k^3+k^2-4mk+2k.
\end{align}

\end{proof}

\begin{proof}[Proof of Lemma \ref{lm:bound4metrics}]
    We first prove that $M_1(f,\mu) \leq 2d$. From convexity of $\|\cdot\|^2$ and Jensen's inequality, we have
    \begin{align}
        \left\|\underset{\xi_1,\xi_2}{\Ebb}\left[S_f(x,\xi_1,\xi_2,\mu) (\xi_1-\xi_2)\right]\right\|^2\leq \underset{\xi_1,\xi_2}{\Ebb}\left\|\left[S_f(x,\xi_1,\xi_2,\mu) (\xi_1-\xi_2)\right]\right\|^2=2d.
    \end{align}
    Then we prove $\ M_2(f,\mu) \leq 2d$. From the Cauchy-Schwarz inequality, we have
    \begin{align}
        &\underset{\xi_1,\xi_2,\xi_3}{\Ebb}\left[S_f(x,\xi_1,\xi_2,\mu) S_f(x,\xi_1,\xi_3,\mu) \langle \xi_1-\xi_2, \xi_1-\xi_3\rangle\right]\\
        \leq &\sqrt{ \underset{\xi_1,\xi_2}{\Ebb}\left[\left\|\xi_1-\xi_2 \right\|^2\right]\underset{\xi_1,\xi_3}{\Ebb}\left[\left\|\xi_1-\xi_3 \right\|^2\right]}=2d.
    \end{align}

    Now we study the mean vector $\Ebb_{\xi_1,\xi_2}\left[S_f(x,\xi_1,\xi_2,\mu) (\xi_1-\xi_2)\right]$ under the condition $\nabla^2 f(x)=cI_d$. We first write it as a sum of three vectors.
    \begin{align}
        \Ebb_{\xi_1,\xi_2}\left[S_f(x,\xi_1,\xi_2,\mu) (\xi_1-\xi_2)\right]&=\int_{f(x+\mu\xi_1)>f(x+\mu\xi_2)}(\xi_1-\xi_2)p(\xi_1)p(\xi_2)d\xi_1d\xi_2\\
        &+  \int_{f(x+\mu\xi_1)=f(x+\mu\xi_2)}(\xi_1-\xi_2)p(\xi_1)p(\xi_2)d\xi_1d\xi_2\\
        &+  \int_{f(x+\mu\xi_1)<f(x+\mu\xi_2)}(\xi_2-\xi_1)p(\xi_1)p(\xi_2)d\xi_1d\xi_2.
    \end{align}

    For the three vectors, we have 
    \begin{align}
        &\int_{f(x+\mu\xi_1)=f(x+\mu\xi_2)}(\xi_1-\xi_2)p(\xi_1)p(\xi_2)d\xi_1d\xi_2\\  =&\int_{f(x+\mu\xi_1)=f(x+\mu\xi_2)}\xi_1p(\xi_1)p(\xi_2)d\xi_1d\xi_2-\int_{f(x+\mu\xi_1)=f(x+\mu\xi_2)}\xi_2p(\xi_1)p(\xi_2)d\xi_1d\xi_2\\
        =&0,
    \end{align}
    and 
    \begin{align}
        &\int_{f(x+\mu\xi_1)>f(x+\mu\xi_2)}(\xi_1-\xi_2)p(\xi_1)p(\xi_2)d\xi_1d\xi_2\\ =&\int_{f(x+\mu\xi_2)>f(x+\mu\xi_1)}(\xi_2-\xi_1)p(\xi_1)p(\xi_2)d\xi_1d\xi_2\\
        =&\int_{f(x+\mu\xi_1)<f(x+\mu\xi_2)}(\xi_2-\xi_1)p(\xi_1)p(\xi_2)d\xi_1d\xi_2.
    \end{align}
    Therefore, we can write $\Ebb_{\xi_1,\xi_2}\left[S_f(x,\xi_1,\xi_2,\mu) (\xi_1-\xi_2)\right]$ as 
    \begin{align}
        \Ebb_{\xi_1,\xi_2}\left[S_f(x,\xi_1,\xi_2,\mu) (\xi_1-\xi_2)\right]&=2\int_{f(x+\mu\xi_1)>f(x+\mu\xi_2)}(\xi_1-\xi_2)p(\xi_1)p(\xi_2)d\xi_1d\xi_2.
    \end{align}

    Now we study the integrals $\int_{f(x+\mu\xi_1)>f(x+\mu\xi_2)}\xi_1p(\xi_1)p(\xi_2)d\xi_1d\xi_2$ and $\int_{f(x+\mu\xi_1)>f(x+\mu\xi_2)}\xi_2p(\xi_1)p(\xi_2)d\xi_1d\xi_2$. We can compute that

    \begin{align}
        &\int_{f(x+\mu\xi_1)>f(x+\mu\xi_2)}\xi_1p(\xi_1)p(\xi_2)d\xi_1d\xi_2\\
        =&\int_{\Rbb^d}\left(\int_{f(x+\mu\xi_1)>f(x+\mu\xi_2)}p(\xi_2)d\xi_2\right)\xi_1p(\xi_1)d\xi_1,
    \end{align} and,
    \begin{align}
        &\int_{f(x+\mu\xi_1)>f(x+\mu\xi_2)}\xi_2p(\xi_1)p(\xi_2)d\xi_1d\xi_2\\
        =&\int_{\Rbb^d}\left(\int_{f(x+\mu\xi_1)>f(x+\mu\xi_2)}p(\xi_1)d\xi_1\right)\xi_2p(\xi_2)d\xi_2\\
        =&\int_{\Rbb^d}\left(\int_{f(x+\mu\xi_2)>f(x+\mu\xi_1)}p(\xi_2)d\xi_2\right)\xi_1p(\xi_1)d\xi_1
    \end{align}

    The condition $\nabla^2 f(x)=cI_d$ implies that $f$ is a quadratic function. We denote $\mathcal{M}(\cdot)$ as the Lebesgue measure on $\Rbb^d$.  Notice that  $\mathcal{M}(\{\xi_2\mid f(x+\mu\xi_2)=f(x+\mu\xi_1)\})=0$ because it is known that the zero point set of any polynomial function has zero Lebesgue measure. Therefore, we have
    \begin{align}
        &\int_{f(x+\mu\xi_1)>f(x+\mu\xi_2)}p(\xi_2)d\xi_2+\int_{f(x+\mu\xi_2)>f(x+\mu\xi_1)}p(\xi_2)d\xi_2\\
        =&1-\int_{f(x+\mu\xi_2)=f(x+\mu\xi_1)}p(\xi_2)d\xi_2=1.
    \end{align}

    Hence we have 
    \begin{align}
        &\int_{f(x+\mu\xi_1)>f(x+\mu\xi_2)}(\xi_1-\xi_2)p(\xi_1)p(\xi_2)d\xi_1d\xi_2\\
        =&2\int_{\Rbb^d}\left(\int_{f(x+\mu\xi_1)>f(x+\mu\xi_2)}p(\xi_2)d\xi_2\right)\xi_1p(\xi_1)d\xi_1-\int_{\Rbb^d}\xi_1p(\xi_1)d\xi_1\\
        =&2\int_{\Rbb^d}\left(\int_{f(x+\mu\xi_1)>f(x+\mu\xi_2)}p(\xi_2)d\xi_2\right)\xi_1p(\xi_1)d\xi_1.
    \end{align}

    Since $\nabla^2 f(x)=cI_d$, we have
    \begin{align*}
        f(x+\mu\xi_1)=f(x)+\mu\nabla f(x)^T\xi_1+\frac{1}{2}\mu^2\|\xi_1\|^2.
    \end{align*}
    Without loss of generality, we assume $\|\nabla f(x)\|\neq 0$ and denote $\xi_1'=\frac{2\langle \nabla f(x), \xi_1\rangle}{\|\nabla f(x)\|^2}\nabla f(x)-\xi_1$. It is easy to verify that $\xi_1'$ also follows $\Nc(0,I_d)$, $\|\xi_1'\|=\|\xi_1\|$ and $f(x+\mu\xi_1)=f(x+\mu\xi_1')$. Therefore, we have

    \begin{align}
        &\int_{\Rbb^d}\left(\int_{f(x+\mu\xi_1)>f(x+\mu\xi_2)}p(\xi_2)d\xi_2\right)\xi_1p(\xi_1)d\xi_1\\
        =&\int_{\Rbb^d}\left(\int_{f(x+\mu\xi_1)>f(x+\mu\xi_2)}p(\xi_2)d\xi_2\right)\xi_1'p(\xi_1')d\xi_1'\\
        =&\frac{1}{2}\int_{\Rbb^d}\left(\int_{f(x+\mu\xi_1)>f(x+\mu\xi_2)}p(\xi_2)d\xi_2\right)\left(\xi_1+\xi'\right)p(\xi_1)d\xi_1.\\
        =&\left(\int_{\Rbb^d}\left(\int_{f(x+\mu\xi_1)>f(x+\mu\xi_2)}p(\xi_2)d\xi_2\right)\frac{\langle \nabla f(x), \xi_1\rangle}{\|\nabla f(x)\|}p(\xi_1)d\xi_1\right)\frac{\nabla f(x)}{\|\nabla f(x)\|}.
    \end{align}

    Furthermore, 
    \begin{align}
        \left|\int_{\Rbb^d}\left(\int_{f(x+\mu\xi_1)>f(x+\mu\xi_2)}p(\xi_2)d\xi_2\right)\frac{\langle \nabla f(x), \xi_1\rangle}{\|\nabla f(x)\|}p(\xi_1)d\xi_1\right|\leq \int_{\Rbb^d}\frac{|\langle \nabla f(x), \xi_1\rangle|}{\|\nabla f(x)\|}p(\xi_1)d\xi_1 =\sqrt{ \frac{2}{{\pi}}}.
    \end{align}

    Finally, we have
    \begin{align}
        &\left\|\underset{\xi_1,\xi_2}{\Ebb}\left[S_f(x,\xi_1,\xi_2,\mu) (\xi_1-\xi_2)\right]\right\|^2\\
        =&\left\|4\left(\int_{\Rbb^d}\left(\int_{f(x+\mu\xi_1)>f(x+\mu\xi_2)}p(\xi_2)d\xi_2\right)\frac{\langle \nabla f(x), \xi_1\rangle}{\|\nabla f(x)\|}p(\xi_1)d\xi_1\right)\frac{\nabla f(x)}{\|\nabla f(x)\|}\right\|^2\\
        \leq & \frac{32}{\pi}.
    \end{align}

\end{proof}

\begin{proof}[Proof of Lemma \ref{lm:rank_grad_est_var}]
    We first  compute that
    \begin{align}\label{eq:bound_gt}
        E\left[\|\tilde g(x) \|_2^2\right] = \frac{1}{|\Ec|^2}   E\left[\left\|  \sum_{(i,j)\in\Ec}(\xi_j-\xi_i) \right\|_2^2\right].
    \end{align}

    For ease of writing, we denote $B_{(i,j)} = \xi_j-\xi_i=S_f(x,\xi_i,\xi_j,\mu)(\xi_i-\xi_j)$ and $\bar \Ec$ as the undirected version of $\Ec$.
    \begin{align}
        &E\left[\left\|  \sum_{(i,j)\in\Ec}B_{(i,j)}\right\|_2^2\right]\\ \label{eq:three_terms}
        =& E\left[  \sum_{(i,j)\in\Ec}\left\|B_{(i,j)} \right\|_2^2  
        + \sum_{\substack{(i,j)\in\bar\Ec\\
        (i',{j})\in\bar\Ec\\
        i\neq i'}}\left\langle B_{(i,j)}, B_{(i',j)}  \right\rangle
        + \sum_{\substack{(i,j)\in\bar\Ec\\
        (i',{j'})\in\bar\Ec\\
        i\neq i',j\neq j'}}\left\langle B_{(i,j)}, B_{(i',j')} \right\rangle.
        \right]
    \end{align}

    With the two metrics $M_1(f,\mu)$, $M_2(f,\mu)$, we can bound the four terms in \eqref{eq:three_terms} as follows:
    \begin{align}
        \label{eq:bound1}
        &E\left[\left\|B_{(i,j)} \right\|_2^2\right] = E\left[\left\|\xi_j-\xi_i \right\|_2^2\right] = 2d,\\  \label{eq:bound2}
        &E\left[\left\langle B_{(i,j)}, B_{(i',j)}  \right\rangle\right]=E\left[\left\langle B_{(i,j)}, B_{(i,j')}  \right\rangle\right] \leq M_2(f,\mu),\\
        \label{eq:bound3}
        &E\left[\left\langle B_{(i,j)}, B_{(i',j')} \right\rangle\right]=\left\|E\left[B_{(i,j)} \right]\right\|_2^2 \leq M_1(f,\mu).
    \end{align}


    Taking \eqref{eq:bound1}, \eqref{eq:bound2} and \eqref{eq:bound3} into \eqref{eq:three_terms}, we obtain
    
    \begin{align}
        &E\left[\left\|   \sum_{(i,j)\in\Ec}B_{(i,j)}\right\|_2^2\right]\\
        &\leq   \sum_{(i,j)\in\Ec}2d
        + \sum_{\substack{(i,j)\in\bar\Ec\\
        (i',{j})\in\bar\Ec\\
        i\neq i'}}M_2(f,\mu)
        + \sum_{\substack{(i,j)\in\bar\Ec\\
        (i',{j'})\in\bar\Ec\\
        i\neq i',j\neq j'}}M_1(f,\mu)\\ \label{eq:bound4}
        &=2|\Ec|d  + N(\Ec)M_2(f,\mu) + (|\Ec|^2-N(\Ec)-|\Ec|)  M_1(f,\mu).
    \end{align}

    Combing \eqref{eq:bound4} with \eqref{eq:bound_gt}, we obtain

    \begin{align}
        E\left[\|\tilde g(x) \|_2^2\right]&\leq \frac{2d}{|\Ec|}  + \frac{N(\Ec)}{|\Ec|^2} M_2(f,\mu) +\frac{|\Ec|^2-N(\Ec)-|\Ec|}{|\Ec|^2} M_1(f,\mu)\\ \label{eq:so_term}
        &\leq \frac{2d}{|\Ec|}  + \frac{N(\Ec)}{|\Ec|^2} M_2(f,\mu) + M_1(f,\mu).
    \end{align}
\end{proof}

\begin{proof}[Proof of Theorem \ref{thm:ZOtopk}]
    Consider the $t$-th iteration, from $L$-smoothness we know that 
    \begin{align}
        \label{eq:smoothness}
        f(x_t) - f(x_{t-1}) \leq -\eta \langle \nabla f(x_{t-1}), g_t \rangle + \frac{\eta^2L}{2}{\|g_t\|_2^2}.
    \end{align}

    Using Lemma \ref{lemma:gradient} and Lemma \ref{lm:rank_grad_est_var}, we have
    \begin{align}
        \Ebb[f(x_t) - f(x_{t-1})] &\leq -\eta \langle \nabla f(x_{t-1}), E[g_t] \rangle + \frac{\eta^2L}{2}E\left[\|g_t\|_2^2\right]\\ \label{eq:descent_step}
        &\leq -\eta\|\nabla f(x_{t-1})\| + C_d\eta\mu L + \frac{\eta^2L}{2}\left(\frac{2d}{|\Ec|}  + \frac{N(\Ec)}{|\Ec|^2} M_2(f,\mu) + M_1(f,\mu)\right), 
    \end{align} where the expectation is taken over the random direction $\xi_{(t,1)},\cdots,\xi_{(t,m)}$.
    
    Rearrange the inequality to obtain

    \begin{align}
        \|\nabla f(x_{t-1})\| \leq \frac{\Ebb[f(x_{t-1}) - f(x_t)]}{\eta} +C_d\mu L  + \frac{\eta L}{2}\left(\frac{2d}{|\Ec|}  + \frac{N(\Ec)}{|\Ec|^2} M_2(f,\mu) + M_1(f,\mu)\right).   
    \end{align} 

    Summing up over $T$ iterations and dividing both sides by $T$, we finally obtain

    \begin{align}
        \Ebb\left[\frac{1}{T}\sum_{t=1}^T \|\nabla f(x_{t-1})\|\right]&\leq \frac{\Ebb[f(x_{0}) - f(x_T)]}{\eta T} + C_d\mu L + \frac{\eta L}{2}\left(\frac{2d}{|\Ec|}  + \frac{N(\Ec)}{|\Ec|^2} M_2(f,\mu) + M_1(f,\mu)\right)\\
        &\leq \frac{f(x_{0}) - f^*}{\eta T} + C_d\mu L + \frac{\eta L}{2}\left(\frac{2d}{|\Ec|}  + \frac{N(\Ec)}{|\Ec|^2} M_2(f,\mu) + M_1(f,\mu)\right).
    \end{align}

    The proof is completed by noting that
    $$\Ebb\left[\min_{t\in\{1,...,T\}} \|\nabla f(x_{t-1})\|\right]\leq \Ebb\left[\frac{1}{T}\sum_{t=1}^T \|\nabla f(x_{t-1})\|\right].$$

\end{proof}

\begin{proof}[Proof of Lemma \ref{lm:int_h}] Without loss of generality, we assume $\|\nabla f(x)\|\neq 0$. We first prove that 
    $$\int_{ \Rc_{1}}\left\langle \nabla f(x), \xi_1 - \xi_2 \right\rangle p(\xi_1)p(\xi_2) d\xi_1d\xi_2 = \frac{1}{\sqrt \pi}\|\nabla f(x)\|.$$

    Now we denote $$x=\frac{\left\langle \nabla f(x), \xi_1 - \xi_2\right\rangle}{\sqrt{2}\|\nabla f(x)\|}.$$ Notice that $x$ follows the distribution $\Nc(0,1)$. Therefore, we have 
    \begin{align}
        &\int_{ \Rc_{1}}\left\langle \nabla f(x), \xi_1 - \xi_2 \right\rangle p(\xi_1)p(\xi_2) d\xi_1d\xi_2\\ =&\sqrt{2}\|\nabla f(x)\|\int_{x>0} x p(x)dx=\frac{1}{\sqrt \pi}\|\nabla f(x)\|,
    \end{align} where we use a well-known fact that $\int_{x>0} x p(x)dx=\frac{1}{\sqrt {2\pi}}$.

    Then we will prove 
    $$\int_{ \Rc_{12}}\left\langle \nabla f(x), \xi_1 - \xi_2 \right\rangle p(\xi_1)p(\xi_2) d\xi_1d\xi_2 = h(\left\|\nabla f(x)\right\|, \mu L, d).$$

    Notice that 
    \begin{align*}
        \Rc_{12}= \{\left(\xi_1, \xi_2\right)\mid \left(\xi_1, \xi_2\right)\in \Rc_1,  \langle \nabla f(x), \xi_1- \xi_2  \rangle-\frac{\mu L}{2}\left(\|\xi_1\|_2^2  + \|\xi_2\|_2^2  \right)\geq 0\}.
    \end{align*}

    We can see that $\Rc_{12}$ is a ball in $\Rbb^{2d}$: 

    \begin{align}
        &\Rc_{12} = \left\{\left(\xi_1, \xi_2\right) \mid \left\|\xi_1-\frac{1}{\mu L}\nabla f(x)\right\|_2^2  + \left\|\xi_2+\frac{1}{\mu L}\nabla f(x)\right\|_2^2   <  \frac{2\|\nabla f(x)\|^2_2}{\mu^2 L^2}\right\}.
    \end{align}

    Now we denote $\zeta = [-\xi_1^\top,\xi_2^\top]^\top\in \Rbb^{2d}$, $\phi=[\nabla f(x)^\top,\nabla f(x)^\top]^\top\in \Rbb^{2d}$. Notice that $\zeta$ still follows an isotropic multivariate Gaussian distribution, we can simplify the integral in LHS of \eqref{eq:target_int_2} as:
    \begin{align}
        \int_{ \Sc_\zeta(\phi)}\left\langle \phi, \zeta \right\rangle p(\zeta) d\zeta
    \end{align}
    where $\Sc_\zeta(\phi) = \left\{\zeta  \mid \left\|\zeta -\frac{1}{\mu L}\phi\right\|_2^2    <  \frac{\|\phi\|^2_2}{\mu^2 L^2}\right\}$.

    We argue that for any rotation matrix $R\in\Rbb^{2d\times 2d}$, i.e., $\det(R)=1$ and $R^\top=R^{-1}$. We have 
    \begin{align}
        \int_{ \Sc_\zeta(\phi)}\left\langle \phi, \zeta \right\rangle p(\zeta) d\zeta = \int_{ \Sc_\zeta(R\phi)}\left\langle R\phi, \zeta \right\rangle p(\zeta) d\zeta.
    \end{align}

    To see that, we can rotate $\zeta$ by $R$. Denote $\zeta' = R^\top\zeta$, we first have 

    \begin{align}
        \Sc_\zeta(R\phi)= \left\{\zeta  \mid \left\|\zeta -\frac{1}{\mu L}R\phi\right\|_2^2    <  \frac{\|\phi\|^2_2}{\mu^2 L^2}\right\} =  \left\{R\zeta'  \mid \left\|\zeta' -\frac{1}{\mu L}\phi\right\|_2^2    <  \frac{\|\phi\|^2_2}{\mu^2 L^2}\right\} = \{R\zeta'|\zeta'\in\Sc_{\zeta'}(\phi)\}
    \end{align}

    \begin{align}
        \int_{ \Sc_\zeta(R\phi)}\left\langle R\phi, \zeta \right\rangle p(\zeta) d\zeta = \int_{\{R\zeta'|\zeta'\in\Sc_{\zeta'}(\phi)\}}\left\langle R\phi, R\zeta' \right\rangle p(R\zeta') dR\zeta' = \int_{\Sc_{\zeta'}(\phi)}\left\langle \phi, \zeta' \right\rangle p(\zeta') d\zeta',
    \end{align} where we use the property of $p(\cdot)$: $p(R\zeta')=p(\zeta')$.

    Now we denote $\phi' = [\|\phi\|,0,...,0]^\top\in\Rbb^{2d}$, it is easy to see that $\phi'$ is a rotated version of $\phi$, i.e., there exists a rotation matrix $R'$ such that $\phi' = R'\phi$. Denote $\zeta=[\zeta_1,...,\zeta_{2d}]^\top$, and $\zeta_{/1}=[\zeta_2,...,\zeta_{2d}]^\top$. We have 
    \begin{align}
        &\int_{ \Sc_\zeta(\phi)}\left\langle \phi, \zeta \right\rangle p(\zeta) d\zeta\\ =& \int_{ \Sc_\zeta(\phi')}\left\langle \phi', \zeta \right\rangle p(\zeta) d\zeta\\
        =& \|\phi\|\int_{\left(\zeta_1-\frac{\|\phi\|}{\mu L}\right)^2+\zeta_2^2+...+\zeta_{2d}^2 \leq \frac{\|\phi\|^2}{\mu^2 L^2}} \zeta_1 p(\zeta) d\zeta\\
        =& \|\phi\|\int_{0}^{\frac{2\|\phi\|}{\mu L}} \zeta_1 \left(\int_{\zeta_2^2+...+\zeta_{2d}^2 \leq \frac{\|\phi\|^2}{\mu^2 L^2}-\left(\zeta_1-\frac{\|\phi\|}{\mu L}\right)^2} p(\zeta_{/1}) d\zeta_{/1}\right) p(\zeta_1)d\zeta_1.
    \end{align}

    Notice that $\zeta_2,...,\zeta_{2d}$ are i.i.d and following standard Gaussian distribution, and hence $\zeta_2^2+...+\zeta_{2d}^2$ follows the Chi-square distribution with $2d-1$ degrees of freedom. Therefore,
    
    \begin{align}
        &\int_{ \Sc_\zeta(\phi)}\left\langle \phi, \zeta \right\rangle p(\zeta) d\zeta\\ 
        =& \|\phi\|\int_{0}^{\frac{2\|\phi\|}{\mu L}} \zeta_1 F_{2d-1}\left(\frac{\|\phi\|^2}{\mu^2 L^2}-\left(\zeta_1-\frac{\|\phi\|}{\mu L}\right)^2\right) p(\zeta_1)d\zeta_1\\
        =& \|\phi\|\int_{0}^{\frac{2\|\phi\|}{\mu L}} \zeta_1 F_{2d-1}\left(\left(\frac{2\|\phi\|}{\mu L}-\zeta_1\right)\zeta_1\right) p(\zeta_1)d\zeta_1\\
        =& \sqrt{2}\left\|\nabla f(x)\right\|\int_{0}^{\frac{2\sqrt{2}\left\|\nabla f(x)\right\|}{\mu L}} \zeta_1 F_{2d-1}\left(\left(\frac{2\sqrt{2}\left\|\nabla f(x)\right\|}{\mu L}-\zeta_1\right)\zeta_1\right) p(\zeta_1)d\zeta_1\\
        =& h(\left\|\nabla f(x)\right\|, \mu L, d).
    \end{align}

\end{proof}

\begin{proof}[Proof of Lemma \ref{lm:inp_h}]

    We define the fucntion $q(u,d):\Rbb_+\times\Zbb_+\to\Rbb_+$ as follows:
    \begin{align}
        q(u,d) = \int_0^{2\sqrt{2}u}x F_{2d-1}\left(\left(2\sqrt{2}u-x\right)x\right)p(x)dx.
    \end{align}
    Notice that $h(v,r,d) = \sqrt{2}v q(v/r,d)$.

    We first need to prove an important property of the function $q(u,d)$:
     $$\lim_{u\to \infty}q(u,d) = \frac{1}{\sqrt{2\pi}}.$$

     Consider an arbitrary $\epsilon>0$. Since $\int_0^{+\infty}x p(x)dx = \frac{1}{\sqrt{2\pi}}$, there exists $N_2>N_1>0$ and such that 
    \begin{align}
        &0<\int_{0}^{N_1}x p(x)dx \leq \frac{\epsilon}{3},\\
        &0<\int_{N_2}^{\infty}x p(x)dx \leq \frac{\epsilon}{3}.
    \end{align}

    On the other hands, for every $u>\frac{N_2}{\sqrt{2}}$, since $\left(2\sqrt{2}u-x\right)x$ is monotonically increasing on $[N_1, N_2]$, we have
    \begin{align}
        \label{eq:lm1_limit2}
        \int_{N_1}^{N_2}x F_{2d-1}\left(\left(2\sqrt{2}u-x\right)x\right)p(x)dx > F_{2d-1}\left(\left(2\sqrt{2}u-N_1\right)N_1\right) \int_{N_1}^{N_2}x p(x)dx.
    \end{align}

    Notice that $$\lim_{u\to\infty}F_{2d-1}\left(\left(2\sqrt{2}u-N_1\right)N_1\right)=1,$$ there must exist a number $N_3$ such that if $u>N_3$, then
    \begin{align}
        \label{eq:lm1_limit3}
        F_{2d-1}\left(\left(2\sqrt{2}u-N_1\right)N_1\right)>1-\sqrt{2\pi}{\epsilon}.
    \end{align}

    Putting together \eqref{eq:lm1_limit2} and \eqref{eq:lm1_limit3}, because $0\leq F_{2d - 1}\left(\left(2\sqrt{2}u-x\right)x\right)\leq 1$, if $ u>\max\{\frac{N_2}{\sqrt{2}}, N_3\}$, we can obtain

    \begin{align}
     0&<   \int_0^{+\infty}x p(x)dx - \int_0^{2\sqrt{2}u}x F_{2d-1}\left(\left(2\sqrt{2}u-x\right)x\right)p(x)dx\\ 
     &\leq \frac{2\epsilon}{3}+ \int_{N_1}^{N_2}x p(x)dx  - \int_{N_1}^{N_2}x F_{2d-1}\left(\left(2\sqrt{2}u-x\right)x\right)p(x)dx\\ 
     &\leq \frac{2\epsilon}{3}+ \int_{N_1}^{N_2}x p(x)dx  - F_{2d-1}\left(\left(2\sqrt{2}u-N_1\right)N_1\right) \int_{N_1}^{N_2}x p(x)dx\\
     &\leq \frac{2\epsilon}{3}+ \int_{N_1}^{N_2}x p(x)dx  - \left(1-\sqrt{2\pi}{\epsilon}\right) \int_{N_1}^{N_2}x p(x)dx\\
     & = \frac{2\epsilon}{3}+ \frac{\epsilon}{3} \int_{N_1}^{N_2}x p(x)dx< \frac{2\epsilon}{3} + \sqrt{2\pi}{\epsilon}\frac{1}{\sqrt{2\pi}}=\epsilon.
    \end{align}

    Taking $\epsilon\to 0$, hence we know that 
    $$\lim_{u\to \infty}q(u,d)=\int_0^{+\infty}x p(x)dx=\frac{1}{\sqrt{2\pi}}.$$ 

    Since $\lim_{u\to \infty}q(u,d)=\frac{1}{\sqrt{2\pi}}$, there exists a constant $C_d$  such that whenever $\left(\frac{1}{2\sqrt{\pi}} + \frac{1}{4}\right)u>\frac{1}{4}C_d$, we have
    \begin{align}
        q(u,d) \geq \frac{1}{\sqrt{2\pi}}-\left(\frac{1}{2\sqrt{2\pi}}- \frac{1}{4\sqrt{2}}\right) =  \frac{1}{2\sqrt{2\pi}} + \frac{1}{4\sqrt{2}}.
    \end{align}

    Therefore, whenever $\left(\frac{1}{2\sqrt{\pi}} + \frac{1}{4}\right)v>\frac{1}{4}C_dr$, we have 
    \begin{align}
        h(v,r,d) = \sqrt{2}v q(v/r,d) \geq \left(\frac{1}{2\sqrt{\pi}} + \frac{1}{4}\right)v\geq \left(\frac{1}{2\sqrt{\pi}} + \frac{1}{4}\right)v  - \frac{1}{4}C_d r.
    \end{align}

    On the other hand, when $\left(\frac{1}{2\sqrt{\pi}} + \frac{1}{4}\right)v\leq \frac{1}{4}C_dr$, we have 
    \begin{align}
        h(v,r,d) = \sqrt{2}v q(v/r,d) \geq 0\geq \left(\frac{1}{2\sqrt{\pi}} + \frac{1}{4}\right)v - \frac{1}{4}C_d r.
    \end{align}
\end{proof}

\newpage
\subsection{Experiment details}

\subsubsection{Hyperparameter choices for the experiments in Section \ref{sec:sim}}
\label{app:exp1_hyper}

Figure \ref{fig:exp1_hyper_Q} and \ref{fig:exp1_hyper_R} show the performance of tested algorithms in Figure \ref{fig:exp1_1} under different hyperparameter settings. For gradient-based algorithms, ZO-SGD, SCOBO, and ZO-RankSGD, we tune the stepsize and set $\gamma=0.1$ for the line search. We need to remark that when implementing the SCOBO \citep{cai2022one}, we remove the sparsity constraint because we found that it will lead to degraded performance for  non-sparse problems like the ones we tested. For GLD-Fast, we tune for the diameter of search sparse, denoted as $\mu$. For CMA-ES, we tune for the initial variance, also denoted as $\mu$ in the figures. To run the experiment in Figure \ref{fig:exp1_1}, we select the optimal choices of hyperparameters based on Figure \ref{fig:exp1_hyper_Q} and \ref{fig:exp1_hyper_R} for each algorithm, respectively. 


\begin{figure}[H]
	\centering
	\begin{subfigure}[b]{0.32\textwidth}
	\centering
	\includegraphics[width=\textwidth]{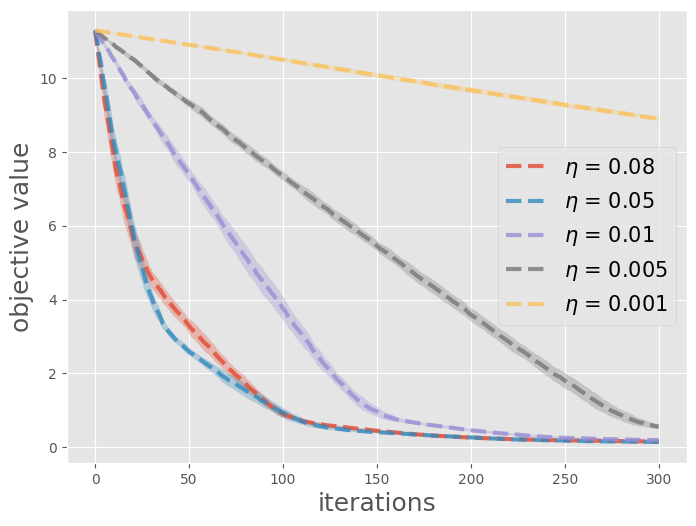}
 	\caption{ZO-SGD}
	\label{}
\end{subfigure}
	\begin{subfigure}[b]{0.32\textwidth}
	\centering
	\includegraphics[width=\textwidth]{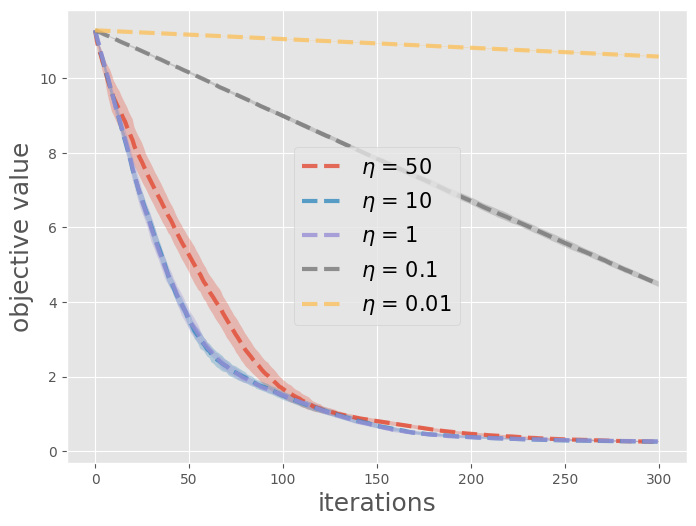}
	\caption{SCOBO}
	\label{}
\end{subfigure}
\begin{subfigure}[b]{0.32\textwidth}
	\centering
	\includegraphics[width=\textwidth]{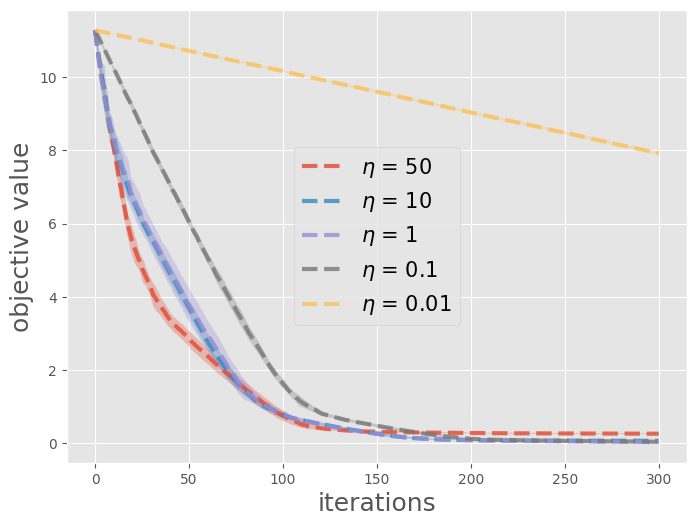}
	\caption{ZO-RankSGD}
	\label{}
\end{subfigure}
\begin{subfigure}[b]{0.32\textwidth}
	\centering
	\includegraphics[width=\textwidth]{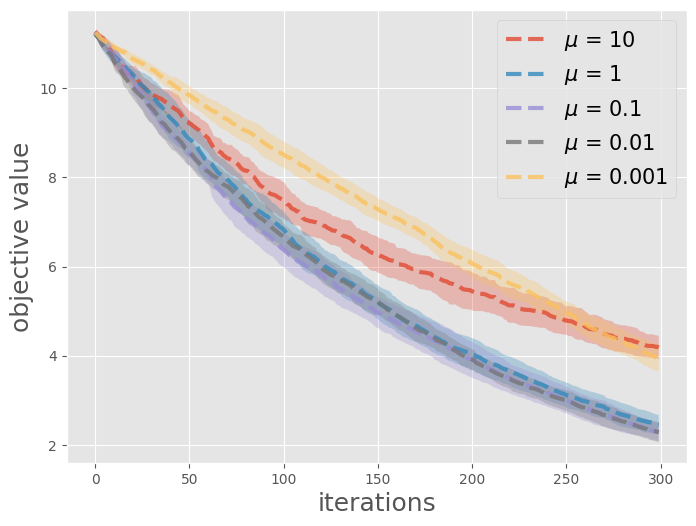}
	\caption{GLD-Fast}
	\label{}
\end{subfigure}
\begin{subfigure}[b]{0.32\textwidth}
	\centering
	\includegraphics[width=\textwidth]{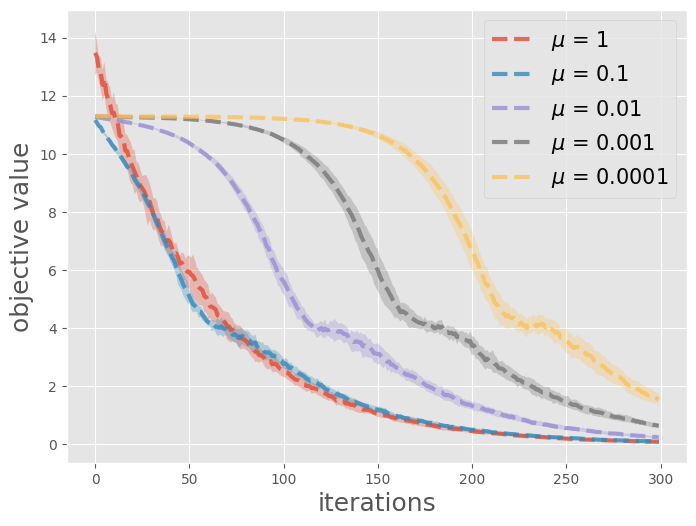}
	\caption{CMA-ES}
	\label{}
\end{subfigure}
\caption{Hyperparameter tuning on Quadratic function.}
	\label{fig:exp1_hyper_Q}
\end{figure}

\begin{figure}[htpb]
	\centering
	\begin{subfigure}[b]{0.32\textwidth}
	\centering
	\includegraphics[width=\textwidth]{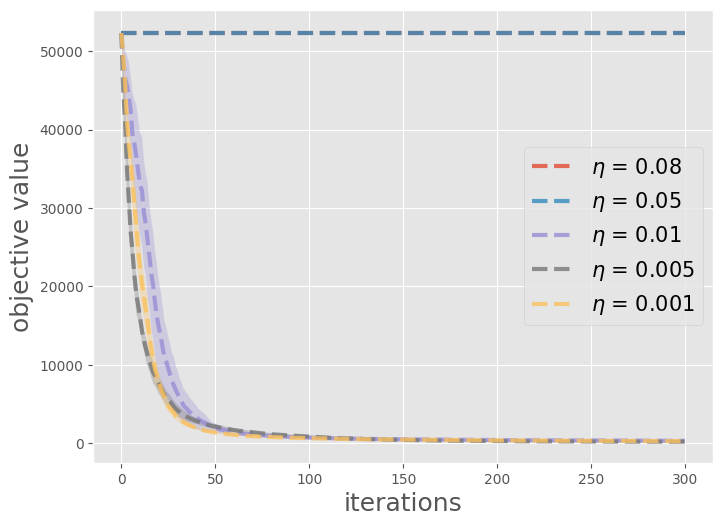}
 	\caption{ZO-SGD}
	\label{}
\end{subfigure}
	\begin{subfigure}[b]{0.32\textwidth}
	\centering
	\includegraphics[width=\textwidth]{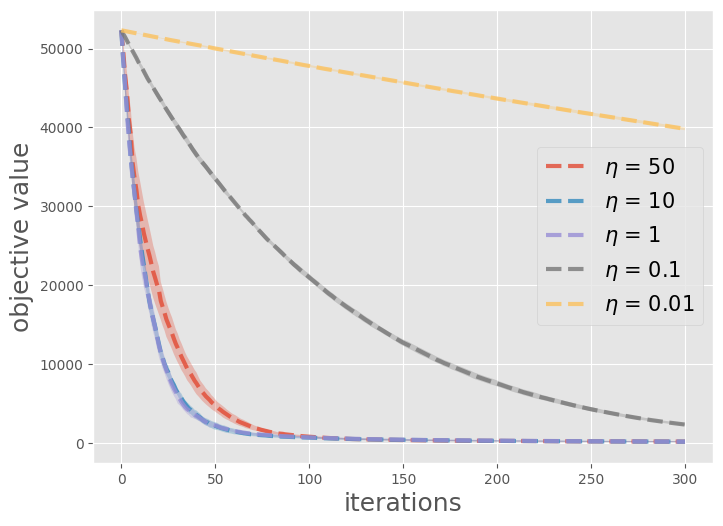}
	\caption{SCOBO}
	\label{}
\end{subfigure}
\begin{subfigure}[b]{0.32\textwidth}
	\centering
	\includegraphics[width=\textwidth]{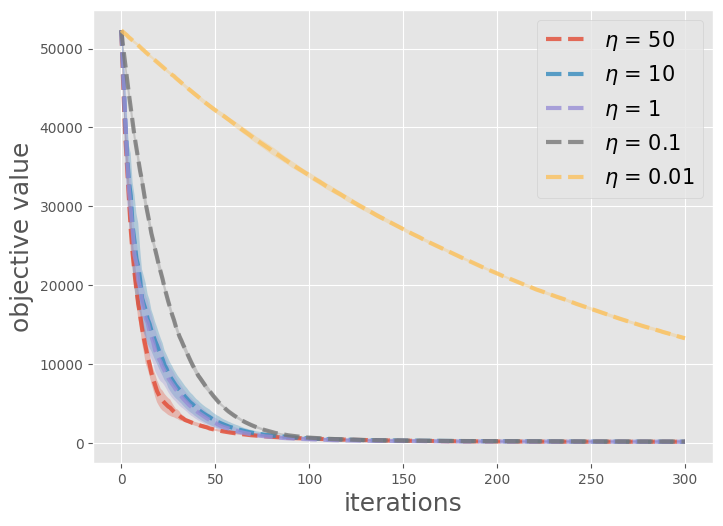}
	\caption{ZO-RankSGD}
	\label{}
\end{subfigure}
\begin{subfigure}[b]{0.32\textwidth}
	\centering
	\includegraphics[width=\textwidth]{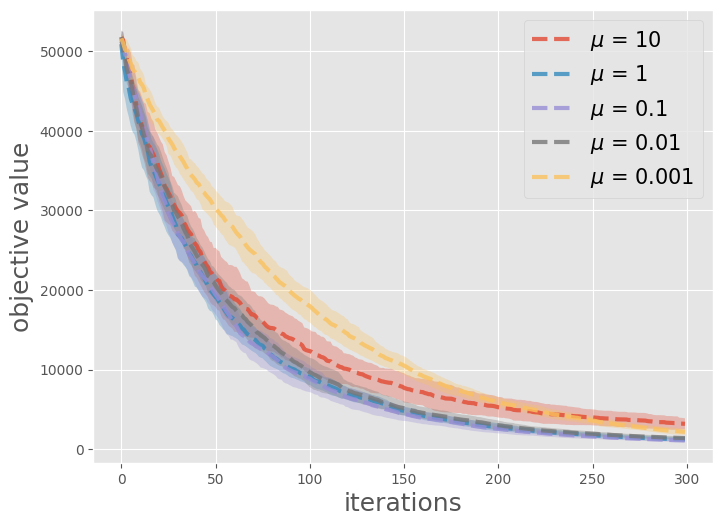}
	\caption{GLD-Fast}
	\label{}
\end{subfigure}
\begin{subfigure}[b]{0.32\textwidth}
	\centering
	\includegraphics[width=\textwidth]{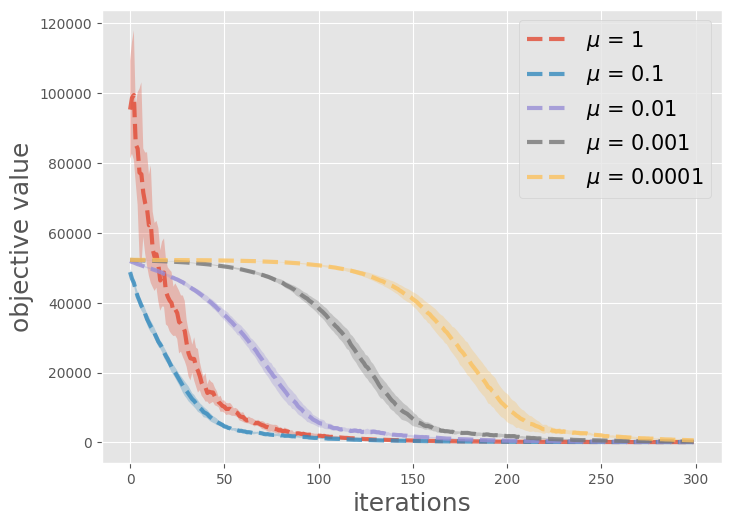}
	\caption{CMA-ES}
	\label{}
\end{subfigure}
\caption{Hyperparameter tuning on Rosenbrock function.}
	\label{fig:exp1_hyper_R}
\end{figure}

{
\subsubsection{Experiments on high-dimensional optimization problem}
\label{app:exp1_highdim}

In this section, we examine the performance of ZO-RankSGD on a high-dimensional optimization problem, and the results is presented in Figure \ref{fig:exp1_highdim}. Specifically, we adopt the same setting as the experiments in Figure \ref{fig:exp1_1}, except that we increase the problem dimension to $10000$. It is worth noting that the CMA-ES is no longer included in this experiment due to the overwhelming computation time. This is mainly because, at this problem scale, CMA-ES requires updating a $10000\times 10000$ covariance matrix.

As we can see, the phenomenon is similar to the results presented in Figure \ref{fig:exp1_1}, showcasing the ability of ZO-RankSGD on tackling high-dimensional problems.

We also provide more details for the hyperparameters selection in these experiments as we did in Section \ref{app:exp1_hyper}. See Figure \ref{fig:exp1_hyper_Q_hd} and \ref{fig:exp1_hyper_R_hd} for this information.

\begin{figure}[htbp]
	\centering
	\begin{subfigure}[b]{0.4\textwidth}
	\centering
	\includegraphics[width=\textwidth]{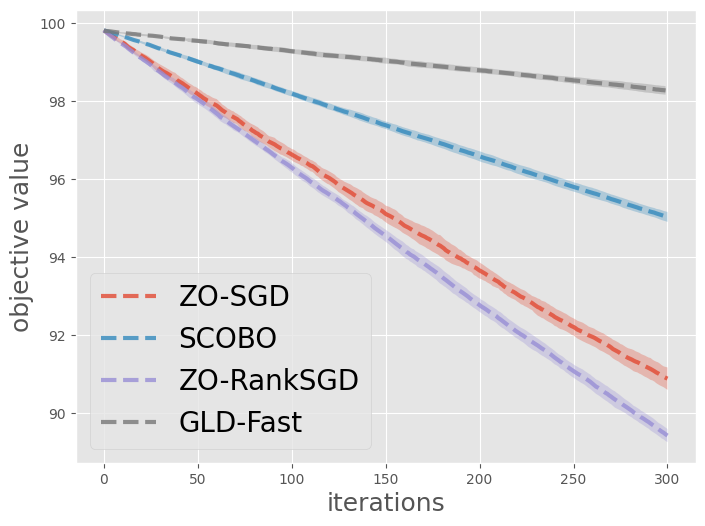}
 	\caption{Quadratic function}
	\label{}
\end{subfigure}
	\begin{subfigure}[b]{0.4\textwidth}
	\centering
	\includegraphics[width=\textwidth]{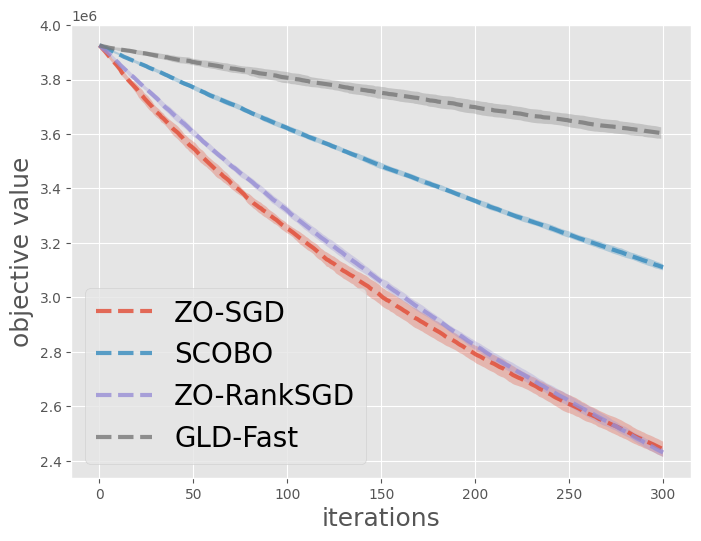}
	\caption{Rosenbrock function}
	\label{}
\end{subfigure}
\caption{Performance of different algorithms on the 10000-dims optimization problems.}
	\label{fig:exp1_highdim}
\end{figure}

\begin{figure}[H]
	\centering
	\begin{subfigure}[b]{0.32\textwidth}
	\centering
	\includegraphics[width=\textwidth]{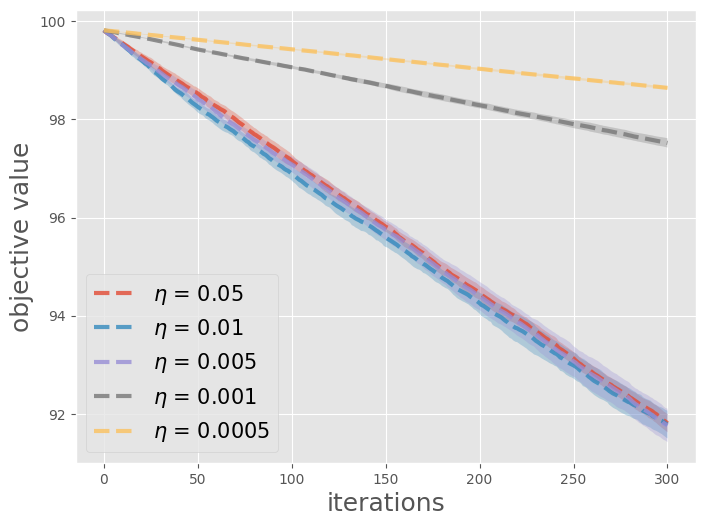}
 	\caption{ZO-SGD}
	\label{}
\end{subfigure}
	\begin{subfigure}[b]{0.32\textwidth}
	\centering
	\includegraphics[width=\textwidth]{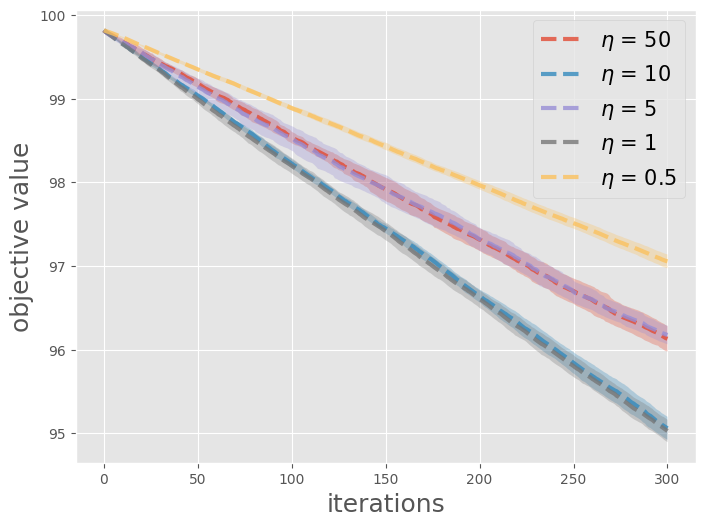}
	\caption{SCOBO}
	\label{}
\end{subfigure}
\begin{subfigure}[b]{0.32\textwidth}
	\centering
	\includegraphics[width=\textwidth]{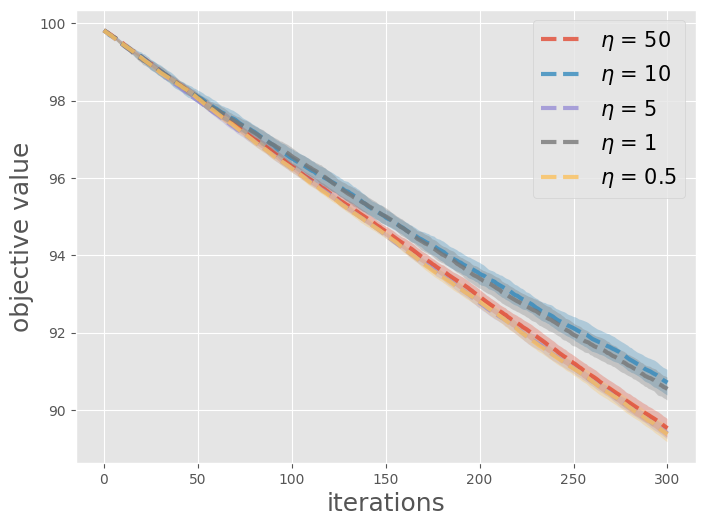}
	\caption{ZO-RankSGD}
	\label{}
\end{subfigure}
\begin{subfigure}[b]{0.32\textwidth}
	\centering
	\includegraphics[width=\textwidth]{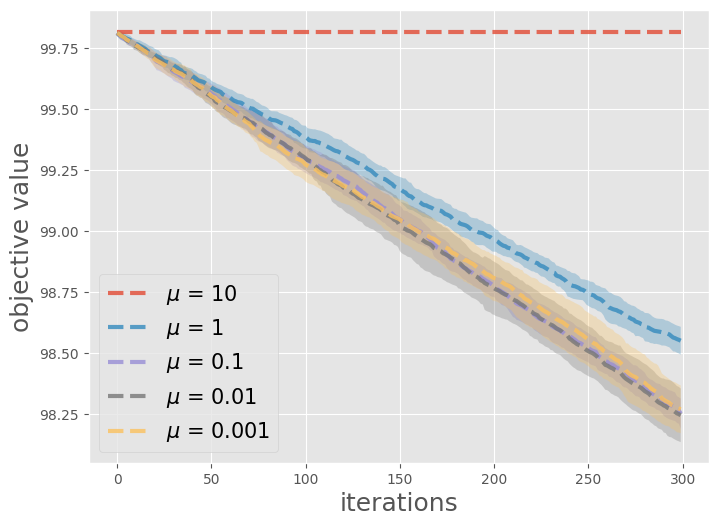}
	\caption{GLD-Fast}
	\label{}
\end{subfigure}
\caption{Hyperparameter tuning on Quadratic function.}
	\label{fig:exp1_hyper_Q_hd}
\end{figure}

\begin{figure}[htpb]
	\centering
	\begin{subfigure}[b]{0.32\textwidth}
	\centering
	\includegraphics[width=\textwidth]{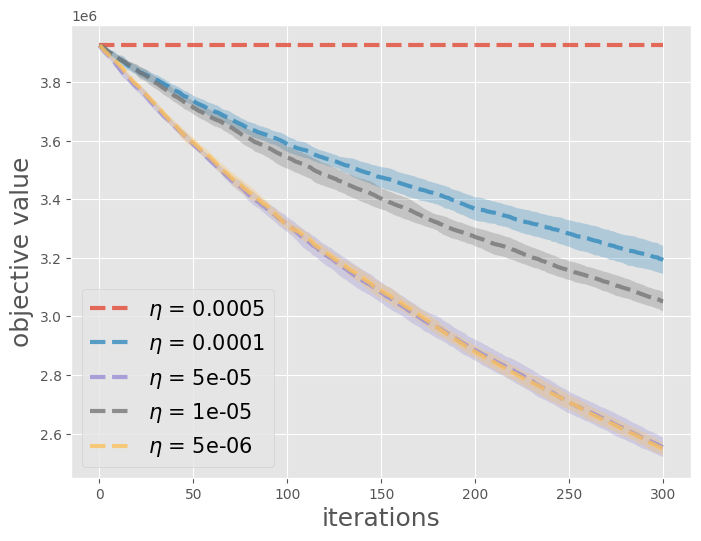}
 	\caption{ZO-SGD}
	\label{}
\end{subfigure}
	\begin{subfigure}[b]{0.32\textwidth}
	\centering
	\includegraphics[width=\textwidth]{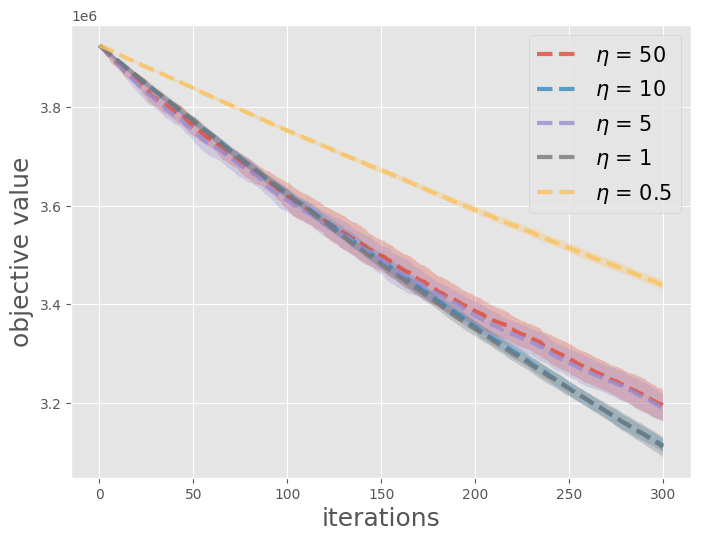}
	\caption{SCOBO}
	\label{}
\end{subfigure}
\begin{subfigure}[b]{0.32\textwidth}
	\centering
	\includegraphics[width=\textwidth]{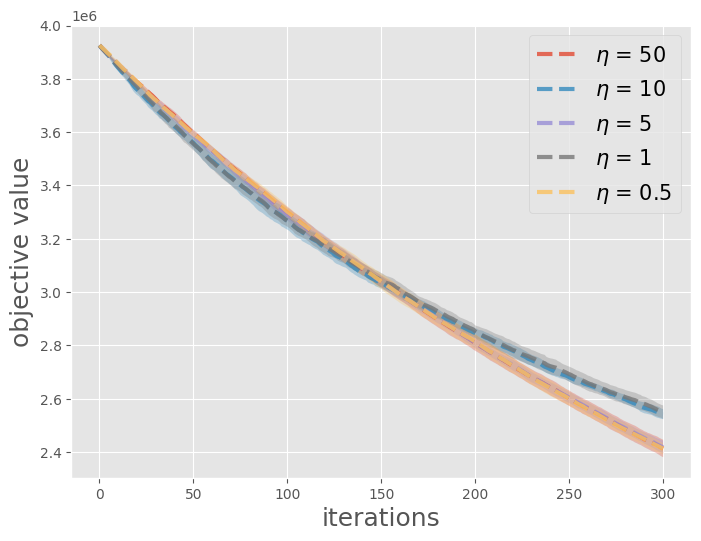}
	\caption{ZO-RankSGD}
	\label{}
\end{subfigure}
\begin{subfigure}[b]{0.32\textwidth}
	\centering
	\includegraphics[width=\textwidth]{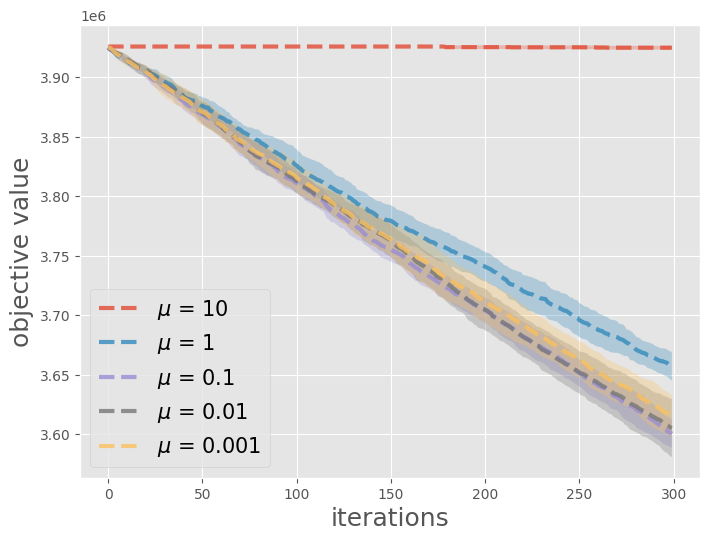}
	\caption{GLD-Fast}
	\label{}
\end{subfigure}
\caption{Hyperparameter tuning on Rosenbrock function.}
	\label{fig:exp1_hyper_R_hd}
\end{figure}}

{
\subsubsection{Experiments with noisy ranking oracles}
\label{app:noisy}

In this section, we present preliminary results to assess the performance of ZO-RankSGD when confronted with noisy ranking oracles. It is essential to note that, unlike the comparison oracle introduced by  \cite{cai2022one}, which employs flipping probabilities to represent errors in noisy comparison feedback, modeling errors in noisy ranking feedback is not straightforward.

To simulate noisy ranking oracles for our experiments, we empirically introduce Gaussian noise to the ground-truth function value. We then construct the corresponding noisy ranking feedback based on the perturbed values.

Figure \ref{fig:exp_noisy} illustrates the performance of both ZO-SGD and ZO-RankSGD under varying levels of noise, denoted by the variance parameter $\eta$, added to the function value. We select the optimal stepsize independently for ZO-SGD and ZO-RankSGD. Remarkably, ZO-RankSGD demonstrates resilience to additive noise across different levels of variance, consistently maintaining performance comparable to ZO-SGD. Notably, for the Rosenbrock function, ZO-RankSGD outperforms ZO-SGD, indicating superior robustness to additive noise. We speculate that this advantage stems from ZO-RankSGD relying solely on rank information for optimization, which may exhibit less variability under mild additive noise.

Looking forward, our aim is to establish a well-defined framework for noisy ranking oracles and extend the theoretical analysis of ZO-RankSGD within this context. Additionally, we hope to explore the theoretical robustness of ZO-RankSGD to noise.

\begin{figure}[htbp]
	\centering
	\begin{subfigure}[b]{0.4\textwidth}
	\centering
	\includegraphics[width=\textwidth]{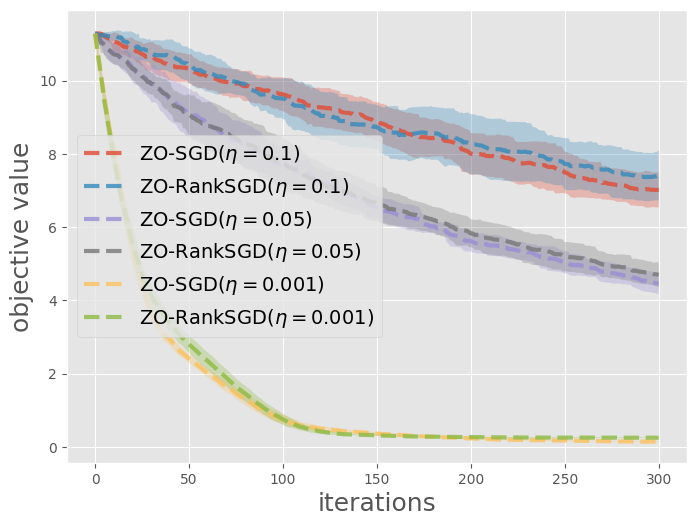}
 	\caption{Quadratic function}
	\label{}
\end{subfigure}
	\begin{subfigure}[b]{0.4\textwidth}
	\centering
	\includegraphics[width=\textwidth]{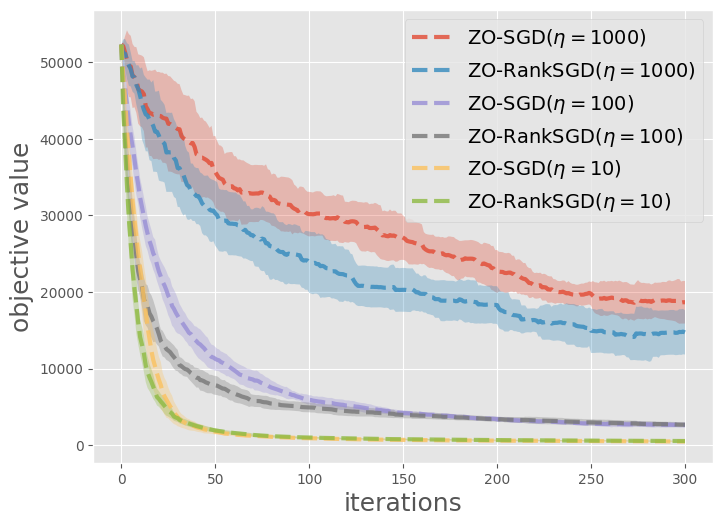}
	\caption{Rosenbrock function}
	\label{}
\end{subfigure}
\caption{Performance of ZO-RankSGD and ZO-SGD under noisy feedback.}
	\label{fig:exp_noisy}
\end{figure}

}

\subsubsection{Details for the experiments in Section \ref{sec:hyper}}
\label{app:rl_scobo}

{
\textbf{Problem dimension. } In this experiment, for all three scenarios from Mujoco environment, we seek to optimize a linear policy mapping states to actions. Specifically, the dimensions for the Reacher-v2, Swimmer-v2, and HalfCheetah-v2 are 24, 18, 108 respectively.
}

\textbf{Comparing ZO-RankSGD with SCOBO in policy optimization.} In this part, we delve into a detailed comparison between ZO-RankSGD and SCOBO. It is important to note that a direct comparison is challenging, as they depend on fundamentally different query oracles. However, we propose an alternative comparison approach from an information perspective. Specifically, given a budget of 5 query points per iteration, SCOBO can make only 4 independent pairwise comparisons, while ZO-RankSGD can obtain information from 10 dependent pairwise comparisons by querying a $(5,5)$-ranking oracle.

From this standpoint, we anticipate that ZO-RankSGD would outshine SCOBO with $m=5$ (which can only query information of 5 points via 4 independent pairwise comparisons), but might fall short when compared to SCOBO with $m=11$ (which can query information of 11 points via 10 independent pairwise comparisons).

To test this hypothesis, we benchmark ZO-RankSGD, SCOBO ($m=5$), and SCOBO ($m=11$) on the same policy optimization problem discussed in Section \ref{sec:hyper}. The results, shown in Figure \ref{fig:exp_rl_scobo}, align precisely with our prediction, thus validating our perspective.

\begin{figure}[H]
	\centering
	\begin{subfigure}[b]{0.32\textwidth}
	\centering
	\includegraphics[width=\textwidth]{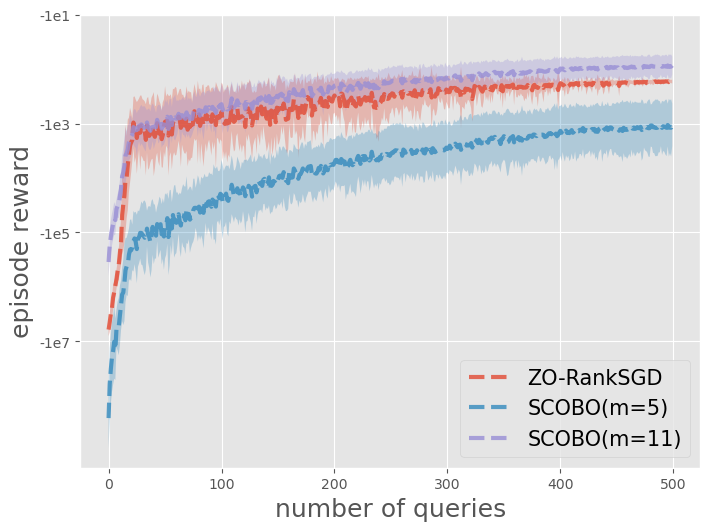}
 	\caption{Reacher-v2}
    \end{subfigure}
	\begin{subfigure}[b]{0.32\textwidth}
	\centering
	\includegraphics[width=\textwidth]{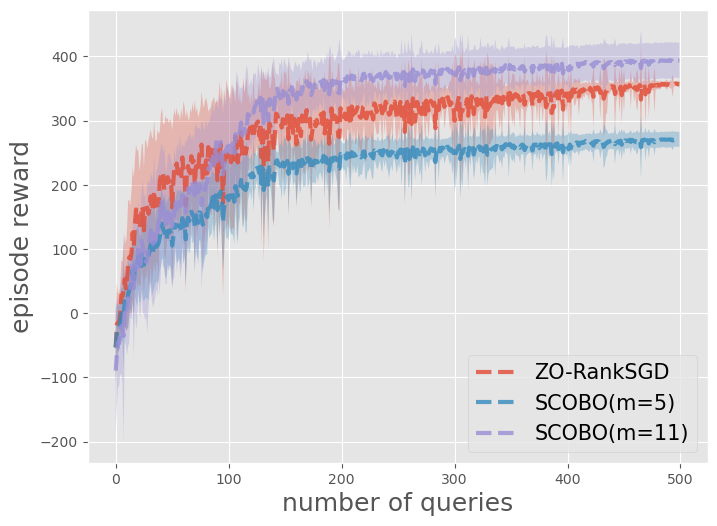}
	\caption{Swimmer-v2}
    \end{subfigure}
    \begin{subfigure}[b]{0.32\textwidth}
	\centering
	\includegraphics[width=\textwidth]{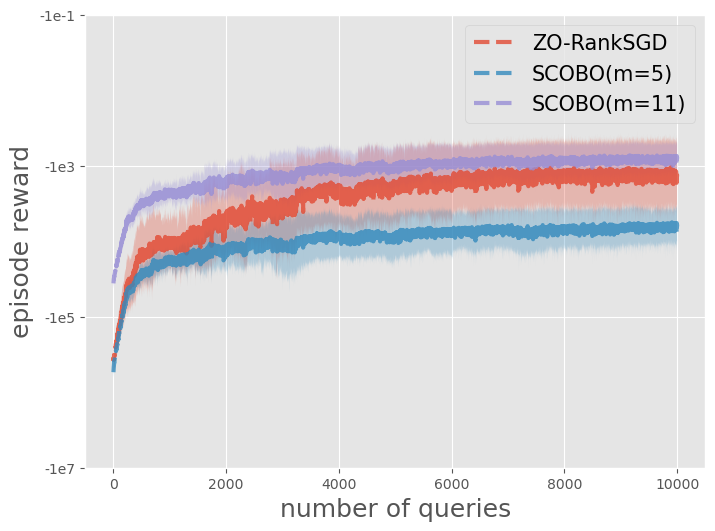}
	\caption{HalfCheetah-v2}
    \end{subfigure}
\caption{Perfomance of ZO-RankSGD and SCOBO on three MuJoCo environments}
	\label{fig:exp_rl_scobo}
\end{figure}

\subsubsection{Details for the experiment in Section \ref{sec:stable}}
\label{app:sd_exp}

\textbf{Modified ZO-RankSGD algorithm for optimizing latent embeddings of Stable Diffusion. } To enhance the efficiency of Algorithm \ref{alg:ZO-RankSGD}, we make a modification to preserve the best image obtained during the optimization process. Specifically, in the original algorithm, the best point among all queried images is not saved, which can lead to inefficiencies. Therefore, we modify the algorithm to store the best point in the gradient estimation step as $x^{**}$ and add it to the later line search step. This modification can be viewed as a combination of ZO-RankSGD and Direct Search \citep{powell1998direct}. Another useful feature of Algorithm \ref{alg:ZO-RankSGD-SD} is that if the best point is not updated in the line search step, the algorithm returns to the gradient estimation step to form a more accurate gradient estimator. The modified algorithm is presented in Algorithm \ref{alg:ZO-RankSGD-SD}. At every iteration in  Algorithm \ref{alg:ZO-RankSGD-SD}, we evaluate the latent embeddings by passing them to the DPM-solver with Stable Diffusion and then ask human or CLIP model to rank the generated images.
\begin{algorithm}[H]
    \small
        \begin{algorithmic}[1]
          \REQUIRE Objective function $f$ (Evaluated by human or CLIP model), initial point $x_0$,  number of queries $m$, stepsize $\eta$, smoothing parameter $\mu$, shrinking rate $\gamma\in(0,1)$, number of trials $l$. 

          \STATE Initialize the best point $x^*=x_0$. 
          \STATE Initialize the gradient memory $\bar g$ with all-zero vectors. 
          \STATE Set $\tau=0$.
        \WHILE{not terminated by user}
    
        \STATE Sample $m$ i.i.d. direction $\{\xi_{1},\cdots,\xi_{m}\}$  from $N(0, I)$.
        \STATE Query  $O_f^{(m,k)}$ with input $\Xc_1=\{x^*+\mu\xi_{1},\cdots,x^*+\mu\xi_{m}\}$ for some $k\leq m$. Denote $\Ibb_1$ as the output.
        \STATE Set $x^{**}$ to be the point in $\Xc_1$ with minimal objective value.
        \STATE Compuate the gradient $\hat g$ using the ranking information $\Ibb_1$ as in Algorithm \ref{alg:ZO-RankSGD}.
        \STATE $\bar g=(\tau\bar g +\hat g )/(\tau+1)$
        \STATE $\tau=\tau+1$
        \STATE Query $O_f^{(m,1)}$ with input $\Xc_2=\{x^*,x^{**},x^*-\eta \bar g,x^*-\eta\gamma \bar  g,...,x^*-\eta\gamma^{m-2} \bar  g\}$. Denote $\Ibb_2$ as the output.
        \IF{$1\in\Ibb_2$, i.e., $x^*$ has the minimal objective value}
            \STATE Go back to line 5.
        \ELSE
            \STATE Set $x^{*}$ to be the point in $\Xc_2$ with minimal objective value.
            \STATE Initialize the gradient memory $\bar g$ with all-zero vectors. 
            \STATE Set $\tau=0$.
        \ENDIF
        \ENDWHILE
        \end{algorithmic}
        \caption{Modified ZO-RankSGD algorithm for optimizing latent embeddings of Stable Diffusion.}
        \label{alg:ZO-RankSGD-SD}
        \end{algorithm}

\textbf{The User Interface for Algorithm \ref{alg:ZO-RankSGD-SD}. } Figure \ref{fig:UI} presents the corresponding user interface (UI) designed for collecting human feedback in Algorithm \ref{alg:ZO-RankSGD-SD}, where 6 images are presented to the users at each round. When the user receives the instruction "Please rank the following image from best to worst," it indicates that the algorithm is in the gradient estimation step. In this case, users are required to rank $k$ best images, where $k$ can be any number they choose. Then, the user receives the instruction "Please input the ID of the best image," indicating that the algorithm has moved to the line search step, and users only need to choose the best image from the presented images. This interface enables easy and intuitive communication between the user and the algorithm, facilitating the optimization process.

\begin{figure}[htbp]
    \centering 
    \includegraphics[width=16cm]{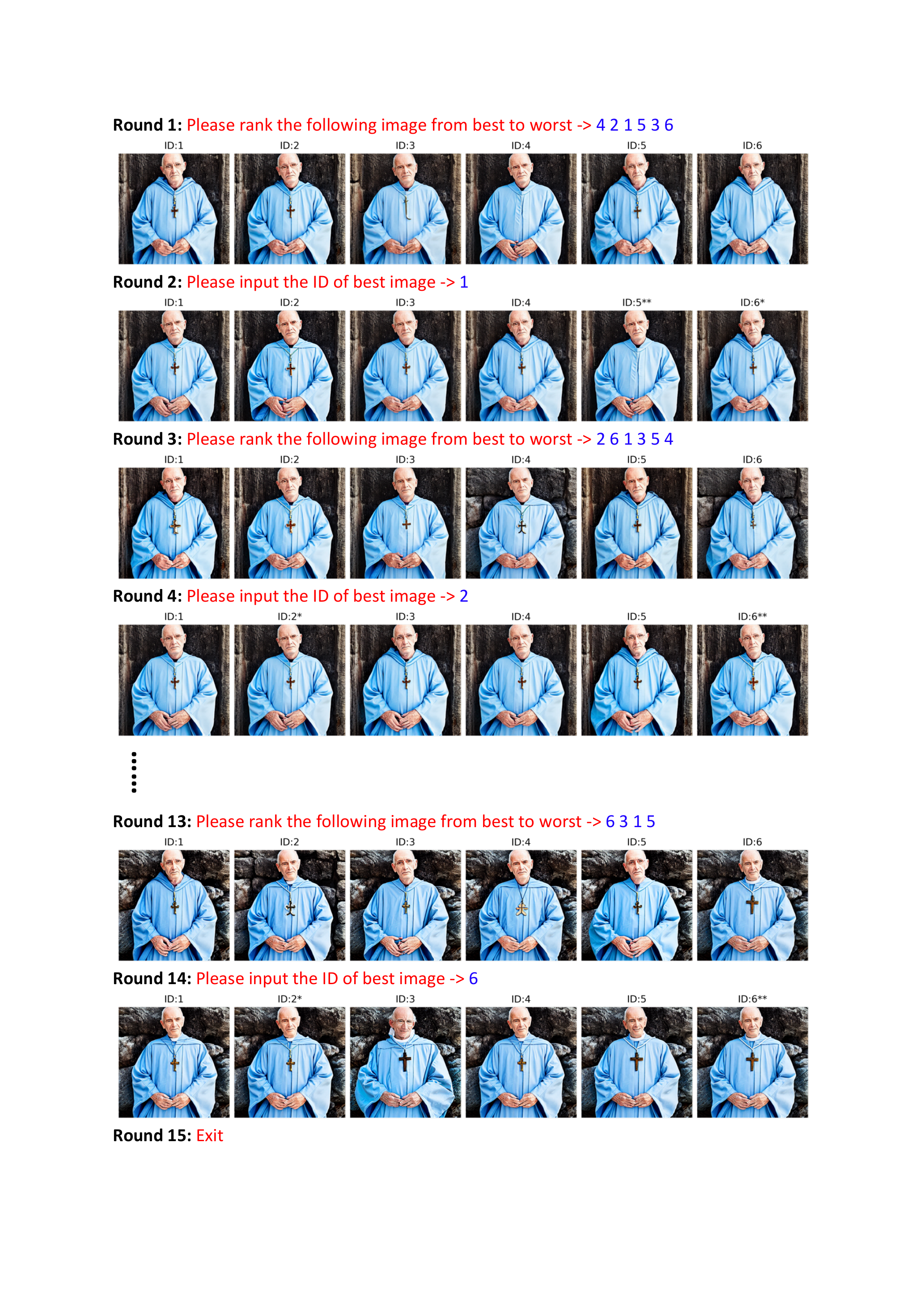}
    
    \caption{The User Interface of Algorithm \ref{alg:ZO-RankSGD-SD}.}
    \label{fig:UI}
\end{figure}

In this experiment, we use some popular text prompts from the internet in \textit{https://mpost.io/best-100-stable-diffusion-prompts-the-most-beautiful-ai-text-to-image-prompts}. More examples like the ones in Figure \ref{fig:sd_exp1} are presented in Figure \ref{fig:sd_exp2}. 

\textbf{Other details. } For all the examples in Figure \ref{fig:sd_exp1} and Figure \ref{fig:sd_exp2}, we set the number of rounds for human feedback between 10 and 20, which was determined based on our experience with the optimization process. For the images obtained from the CLIP similarity score, we fixed the number of querying rounds to 50. Both the optimization from human feedback and CLIP similarity score used the same parameters for Algorithm \ref{alg:ZO-RankSGD-SD}: $\eta=1$, $\mu=0.1$, and $\gamma=0.5$.  { Especially, the $\mu=0.1$ is chosen according to the rule discussed in Section \ref{sec:theory}, as we select a sufficiently small $\mu$ value which still allows humans to perceive differences between perturbed images. Since the latent embedding of Stable Diffusion is $64\times 3\times 3$, the problem dimension of the optimization problem is 576.}


\begin{figure}[H]
	\centering
	\begin{subfigure}[b]{0.85\textwidth}
	\centering
	\includegraphics[width=\textwidth]{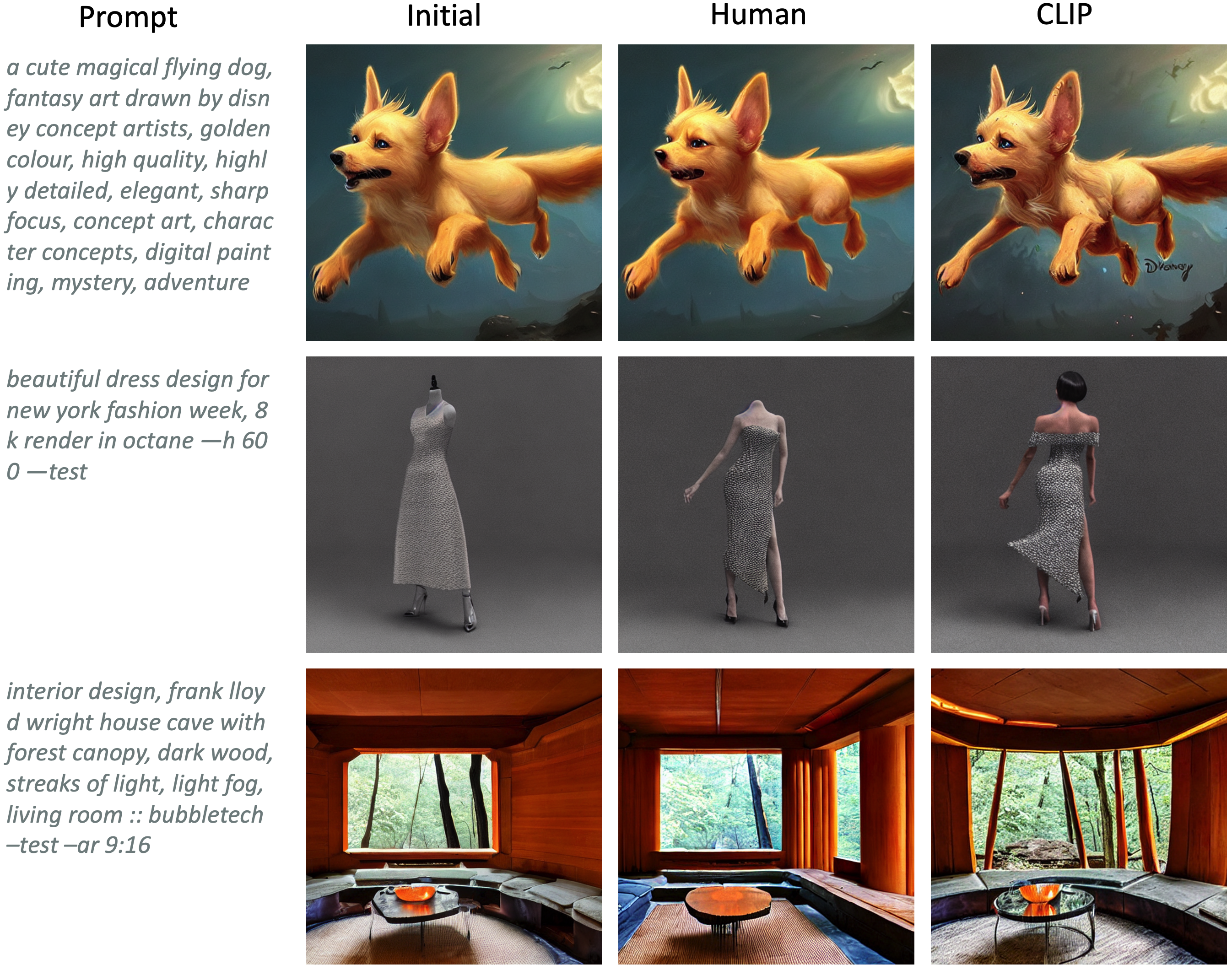}
\end{subfigure}
	\begin{subfigure}[b]{0.85\textwidth}
	\centering
	\includegraphics[width=\textwidth]{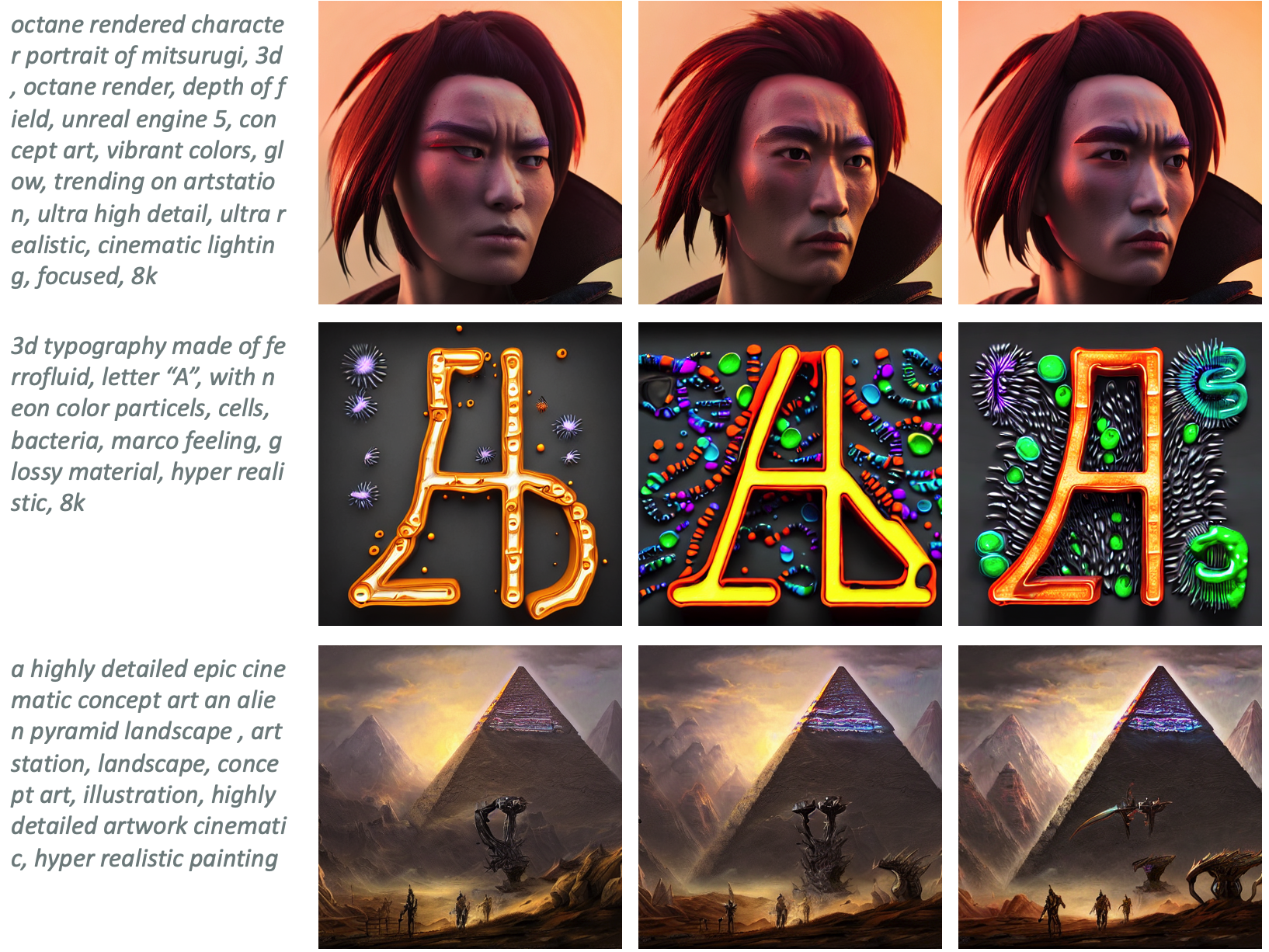}
\end{subfigure}

\begin{subfigure}[b]{0.85\textwidth}
	\centering
	\includegraphics[width=\textwidth]{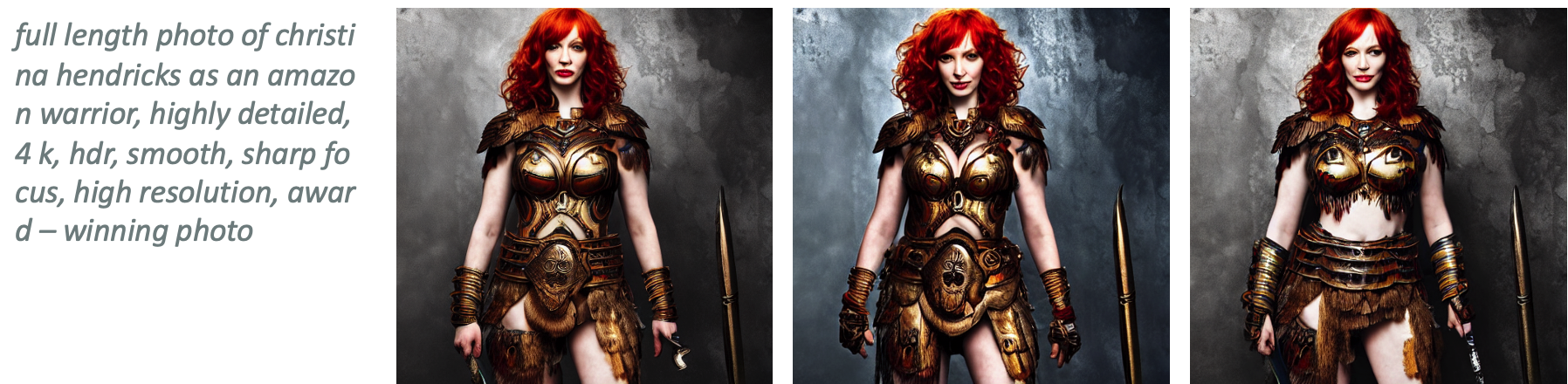}
\end{subfigure}
\caption{More examples.}
	\label{fig:sd_exp2}
\end{figure}

\end{document}